\begin{document}

\newcommand{\bm}[1]{\mathbf{#1}}
\newcommand{\trsize}{n}
\newcommand{\drsize}{m}
\newcommand{\tasize}{q}
\newcommand{\testdrsize}{{\widehat{m}}}
\newcommand{\testtasize}{{\widehat{q}}}
\newcommand{\transpose}{^\textnormal{T}}
\newcommand{\mrows}{a}
\newcommand{\mcols}{b}
\newcommand{\nrows}{c}
\newcommand{\ncols}{d}
\newcommand{\ulen}{f}
\newcommand{\vlen}{e}
\newcommand{\kf}{k}
\newcommand{\dtkf}{\kf_{\mathcal{D},\mathcal{T}}}
\newcommand{\ddkf}{\kf_{\mathcal{D},\mathcal{D}}}
\newcommand{\dkf}{\kf_{\mathcal{D}}}
\newcommand{\tkf}{\kf_{\mathcal{T}}}

\newcommand{\trdrinds}{d}
\newcommand{\trtainds}{t}
\newcommand{\tedrinds}{\widehat{d}}
\newcommand{\tetainds}{\widehat{t}}

\newcommand{\dkm}{\bm{D}}
\newcommand{\tkm}{\bm{T}}
\newcommand{\drug}{d}
\newcommand{\targ}{t}
\newcommand{\sdrug}{d'}
\newcommand{\trdrugs}{\mathcal{D}_{\textnormal{train}}}
\newcommand{\trtargets}{\mathcal{T}_{\textnormal{train}}}
\newcommand{\predfun}{f}

\newcommand{\testsetsize}{{\overline{\trsize}}}



\title{Generalized vec trick for fast learning of pairwise kernel models}






\author{Markus Viljanen         \and
        Antti Airola  \and
        Tapio Pahikkala
}

\institute{M. Viljanen \at
              Department of Computing, University of Turku, Finland\\
              \email{majuvi@utu.fi}           
           \and
           A. Airola \at
           Department of Computing, University of Turku, Finland\\
            \email{ajairo@utu.fi}           
           \and
           T. Pahikkala \at
           Department of Computing, University of Turku, Finland\\
            \email{aatapa@utu.fi}      
}

\date{Received: date / Accepted: date}

\maketitle

\begin{abstract}
Pairwise learning corresponds to the supervised learning setting where the goal is to make predictions for pairs of objects. Prominent applications include predicting drug-target or protein-protein interactions, or customer-product preferences.
In this work, we present a comprehensive review of pairwise kernels, that have been proposed for incorporating prior knowledge about the relationship between the objects.
Specifically, we consider the standard, symmetric and anti-symmetric Kronecker product kernels, metric-learning, Cartesian, ranking, as well as linear, polynomial and Gaussian kernels. Recently, a $O(\trsize\drsize+\trsize\tasize)$ time generalized vec trick algorithm, where $\trsize$, $\drsize$, and $\tasize$ denote the number of pairs, drugs and targets, was introduced for training kernel methods with the Kronecker product kernel. This was a significant improvement over previous $O(\trsize^2)$ training methods, since in most real-world applications $\drsize,\tasize << \trsize$.
In this work we show how all the reviewed kernels can be expressed as sums of Kronecker products, allowing the use of generalized vec trick for speeding up their computation. In the experiments, we demonstrate how the introduced approach allows scaling pairwise kernels to much larger data sets than previously feasible, and provide an extensive comparison of the kernels on a number of biological interaction prediction tasks.

\keywords{Kernel methods \and Pairwise kernels \and Pairwise learning \and Interaction prediction}

\end{abstract}

\section{Introduction}

The goal of supervised learning is to learn an unknown function $f:\mathcal{X} \rightarrow \mathbb{R}$ from a set of training examples $Z=\{(x_i,y_i)\}_{i=1}^\trsize$ each consisting of an input $x_i\in\mathcal{X}$ and an associated label $y_i\in \mathbb{R}$. The learning algorithm returns a function that approximates the true function on the training set with the aim of generalizing to data unseen during the training phase.

In pairwise learning, each input $x$ is viewed as a pair of objects $x=(\drug,\targ)$ that we call here drugs $\drug\in\mathcal{D}$ and targets $\targ\in\mathcal{T}$. The task may, for example, then be to predict drug and target interaction $y=\predfun(\drug,\targ)$ values to test for novel interactions in drug discovery. This view is not unique and the inputs may be considered as paired in many different applications. For example, recommender system literature deals with ratings given to customer and product pairs \citep{basilico2004unifying,menon2010loglinear,rendle2010factorization}. Information retrieval can be formulated as predicting the relevance of query and document pairs \citep{Liu2011letorir}. Bioinformatics has utilized machine learning for protein-protein  \citep{Benhur2005,ruan2018improving},  protein-RNA  \citep{bellucci2011predicting}  and  drug-target  \citep{gonen2012predicting,pahikkala2015dti,cichonska2017computational,cichonska2018learning} interaction prediction.  Other applications include image labeling \citep{bernardino2015emabarrassing}, and link prediction in social networks \citep{pieter2005link}
Various terminology and frameworks have been used to describe the general learning problem (see e.g. \citet{waegeman2019multi} for overview). These include pairwise (kernel) learning  \citep{Benhur2005,park2009pairwise,cichonska2017computational,cichonska2018learning}, dyadic prediction \citep{menon2010loglinear,pahikkala2014twostep,Schafer2015}, pair-input prediction \citep{park2012flaws}, graph inference \citep{vert2007new}, link prediction \citep{pieter2005link,Kashima2009linkprob}, relational learning \citep{pahikkala2010reciprocalkm,waegeman2012learninggraded,pahikkala2013efficient}, multi-task \citep{bonilla2007taskspecific,bernard2017kernel} and as a special case zero-shot \citep{bernardino2015emabarrassing} learning.

Different fields often consider different but related pairwise prediction tasks. These tasks can be divided into settings where different methods are applicable and which have varying degrees of difficulty. For example, in recommender systems one often assumes that all customers and products belong to the training set and that there are some example interactions for each customer and each product \citep{basilico2004unifying}. Predictions are needed for (customer, product)-pairs where the rating is missing. In this setting, methods based on factorizing the interaction matrix can be used, and no explicit features are required.
However, in cold-start problems the task is to predict an interaction of a new customer and product pair, where we do not have any examples with the same customer or product in the training set. Basic factorization methods do not generalize to such settings, rather methods that make use of customer and product features are needed (sometimes called side information). In this work, we restrict our considerations to methods that can generalize to novel drugs and targets, rather than just imputing missing interactions between known ones.

Kernel methods are a standard method in supervised learning. They provide feature based generalization beyond training drugs and targets, and are used as a competitive method especially in the cold start setting. Kernel methods can be applied when the training data either have explicit feature vectors or implicit feature vectors are defined via positive semidefinite kernel functions. 
When drugs and targets have separate features or kernels, we can use pairwise kernels to define a kernel for the pair. One simple way to define a pairwise kernel, is to concatenate a feature vector for the drug and the target together, and apply a standard kernel such as polynomial or Gaussian on this feature vector. However, a large variety of different kernels specifically defined for pairwise data have been introduced in previous literature, starting with the introduction of the standard \citep{Benhur2005,basilico2004unifying,oyama2004} and symmetric \citep{Benhur2005} Kronecker product kernels.

 In this work we present a comprehensive review on pairwise kernels, and establish a joint framework under which the most commonly used of them can be expressed as linear combinations of Kronecker products. In particular, we cover the following kernels:
\begin{itemize}
\setlength\itemsep{0em}
    \item Linear kernel
    \item 2nd degreee polynomial kernel
    \item Gaussian kernel
    \item Kronecker product kernel \citep{Benhur2005,basilico2004unifying,oyama2004}
    \item symmetric Kronecker product kernel \citep{Benhur2005}
    \item anti-symmetric Kronecker product kernel \citep{pahikkala2010reciprocalkm} \item Cartesian kernel \citep{kashima2009pairwise}
    \item metric learning pairwise kernel \citep{vert2007new}
    \item ranking kernel  \citep{herbrich2000large,waegeman2012learninggraded}
\end{itemize}
\noindent In our framework, we represent the linear combinations corresponding to these kernels with specifically designed operator notation. The notation is not only mathematically elegant but, as we show below, enables the analysis of the kernel properties and assumptions, fast computation and easy implementation.

We start by introducing the two most fundamental assumptions. The first, what we call \textbf{pairwise data assumption}, is that both the drugs and targets  tend to appear several times as parts of different inputs in an observed data set. For example, the same drug $\drug_i$ may belong to two different examples $(\drug_i,\targ_j)$ and $(\drug_i,\targ_k)$. In particular, if $\trsize$ is the number of observed data and $\drsize$ and $\tasize$ denote the numbers of unique drugs and targets in the data, then $\trsize>>\drsize+\tasize$. This observation can be used to develop methods with computational shortcuts tailored specifically for the pairwise learning task, as we will elaborate below in more detail.

The second, what we refer to as \textbf{non-linearity assumption}, states another property inherent in pairwise learning problems, which is that the functions to be learned are usually not linear combinations of functions depending only of $\drug$ or $\targ$. The opposite case, where the function can be expressed as $\predfun(\drug,\targ)=\predfun_\drug(\drug)+\predfun_\targ(\targ)$ for some $\predfun_\drug$ and $\predfun_\targ$, would indicate that $\predfun(\drug_1,\targ_1 )>\predfun(\drug_2,\targ_1 )\Longrightarrow \predfun(\drug_1,\targ_2 )>\predfun(\drug_2,\targ_2 )$ for all drugs and targets, that is, if drug $\drug_1$ is more effective than drug $\drug_2$ against target $\targ_1$, then drug $\drug_1$ is also more effective than drug $\drug_2$ against target $\targ_2$. In other words, there would always be a single drug that would be the best choice for all targets (and vice versa). This is illustrated in Figure~\ref{chess} with a simple example, where the effect of drug on target is a function of the row and column number parities of both drugs and targets.
The 'chessboard' is a true pairwise data set where an interaction exists between even drugs and odd targets, or vice versa, corresponding to a XOR function between their parities, that is unlearnable with linear methods according to the classical result by \citet{minsky1969perceptrons}. In contrast, the 'tablecloth' a linear function of interaction strengths of odd drugs and odd targets, which are therefore completely independent of each other.

\begin{figure}[h]
    \centering
    \includegraphics[width=0.9\textwidth]{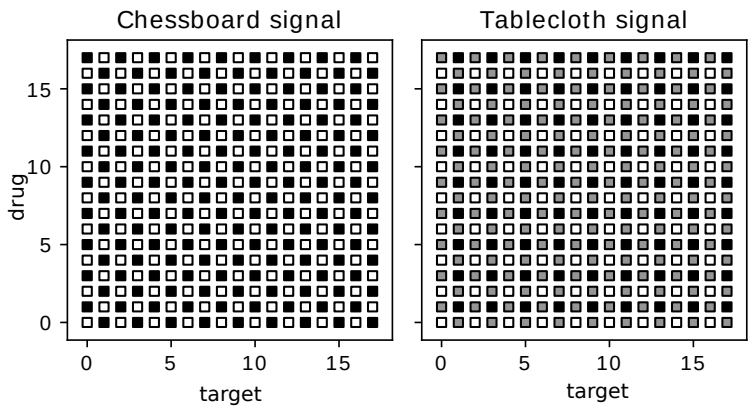}
    \caption{Illustration of pairwise data. The 'chessboard' is a XOR-function of drug and target parities, whereas 'tablecloth' is a SUM-function of the parities.}
    \label{chess}
\end{figure}

The runtime and in many cases the memory use of kernel solvers grow at least quadratically with respect to the number of pairs, and hence the use of pairwise kernels becomes infeasible in cases where the number of pairs is large. Faster training algorithms that avoid the costly step of building the pairwise kernel matrix have been previously proposed for certain specific cases. A fast solution to compute closed form solution is known for Kronecker product kernel, when minimizing ridge regression loss on so-called complete data that includes labels between all training drugs and targets \citep{bernardino2015emabarrassing,pahikkala2014twostep,pahikkala2013efficient,Stock2018twostep,Stock2018Algebraic}.
\citet{Kashima2009linkprob} show how computations for Cartesian kernel can be made faster, and computational shortcuts for speeding up the use of the ranking kernel are known for the ridge regression \citep{pahikkala2009efficient} and support vector machine algorithms \citep{kuo2014large}. Yet thus far there has been no unified approach that would allow to plug in any of the commonly used pairwise kernels to a kernel method training algorithm that guarantees better than $O(\trsize^2)$ scaling.

An exact computationally efficient algorithm has recently been proposed \citep{Airola2017gvt} for the special case of Kronecker product kernels when the data is not complete. The computational complexity of multiplying a vector with the kernel matrix is reduced from $O(\trsize^2)$ to $O(\trsize\drsize+\trsize\tasize)$. This improvement has already shown to have major practical relevance, for example, the winning team of a recently held IDG-DREAM
Drug-Kinase Binding Prediction Challenge on developing
models for predicting 
unexplored drug-target potencies, used this algorithm \citep{Cichonska2019Crowdsourced}.

In this work, we extend this result to present the first general $O(\trsize\drsize+\trsize\tasize)$ approach that simultaneously covers all the widely used pairwise kernels. This is a major improvement if the pairwise assumption holds, and the approach also allows accelerating the computation of the more traditional standard kernels, such as Gaussian and polynomial, if the data can be decomposed as pairwise. This is made possible by the proposed operator framework.

Finally, we perform an experimental comparison of different pairwise kernels on four different biological data sets, in which we compare the prediction performance, training time, number of training iterations and memory usage.
The kernels are compared with each other in the following four different prediction tasks: First, prediction of interaction strength between a drug and a target of which both have been observed in the training data as part of some other drug-target pair with a known interaction strength. Second, interaction strength prediction with novel targets that have not been observed in the training data as part of any known drug-target pair. Third, prediction with novel drugs, and fourth, prediction with both novel drugs and targets. As is shown in by the results, the prediction performances in these four tasks are tremendously different, hence underlying the importance that they should be separately considered in pairwise learning studies. Further, the results indicate that it is not at all self-evident that the expected prediction performance improvements of the nonlinear pairwise kernels over the linear ones implied by the nonlinearity assumption would always translate to practice.

To conclude, the major contributions of this work are as follows:
\begin{itemize}
    \item Review of the standard kernels for pairwise data that establishes a common operator based framework for analysing and implementing the kernels.
    \item The framework allows accelerating the computation of matrix-vector products with the pairwise kernels to $O(\trsize\drsize+\trsize\tasize)$ time, leading to considerably faster training methods.
    \item Comprehensive experimental comparison of the pairwise kernels on biological interaction data sets with four different prediction problems.
\end{itemize}

\section{Pairwise learning problem}

Given the spaces of drugs $\mathcal{D}$ and targets $\mathcal{T}$, the possible drug and target pairs are the Cartesian product $\mathcal{X}=\mathcal{D}\times\mathcal{T}$. The label space is denoted $\mathcal{Y}$, where $\mathcal{Y}=\mathbb{R}$ for regression and $\mathcal{Y}=\{0,1\}$ for classification. We further denote the joint space of the pairwise inputs and labels as $\mathcal{Z}=\mathcal{X}\times\mathcal{Y}$. The observed data set consists of $\trsize$ labeled drug-target pairs $Z_{\textnormal{obs}}=(X_{\textnormal{obs}},\bm{y})\in(\mathcal{D}\times\mathcal{T}\times\mathcal{Y})^{\trsize}$. 
We further define $\bm{\trdrinds}\in\mathcal{D}^\trsize$ and $\bm{\trtainds}\in\mathcal{T}^\trsize$ to be drug and target sequences such that $(\trdrinds_i,\trtainds_i)=x_i$.
Finally,
we let $\mathcal{D}_{\textnormal{obs}}$ 
and $\mathcal{T}_{\textnormal{obs}}$ denote the sets of drugs and targets observed in the sample and $\mathcal{Z}_{\textnormal{obs}}$ to denote the set of observed unique drug-target pairs, so that we have $\drsize=\arrowvert\mathcal{D}_{\textnormal{obs}}\arrowvert$ unique drugs and $\tasize=\arrowvert\mathcal{T}_{\textnormal{obs}}\arrowvert$ unique targets, 


Our goal is to learn a prediction function $\predfun:\mathcal{D}\times\mathcal{T}\rightarrow\mathcal{Y}$ from the training set, such that $\predfun$ can correctly predict the labels for a new pair $(\drug,\targ)\in\mathcal{D}\times\mathcal{T}$. The drug $\drug$ and target $\targ$ in the new pair may or may not belong to drugs $\mathcal{D}_{\textnormal{obs}}$  and targets $\mathcal{T}_{\textnormal{obs}}$ observed during training time.
Here, four different settings emerge, as illustrated in Figure~\ref{drawing-2}:
\begin{enumerate}
\item $\drug\in\mathcal{D}_{\textnormal{obs}}$  and $\targ\in\mathcal{T}_{\textnormal{obs}}$: prediction for known drugs and targets
\item $\drug\in\mathcal{D}_{\textnormal{obs}}$  and $\targ\notin\mathcal{T}_{\textnormal{obs}}$: prediction for novel targets
\item $\drug\notin\mathcal{D}_{\textnormal{obs}}$  and $\targ\in\mathcal{T}_{\textnormal{obs}}$: prediction for novel drugs
\item $\drug\notin\mathcal{D}_{\textnormal{obs}}$  and $\targ\notin\mathcal{T}_{\textnormal{obs}}$: prediction for novel drugs and targets
\end{enumerate}

\begin{figure}[h]
    \centering
    \includegraphics[width=0.6\textwidth]{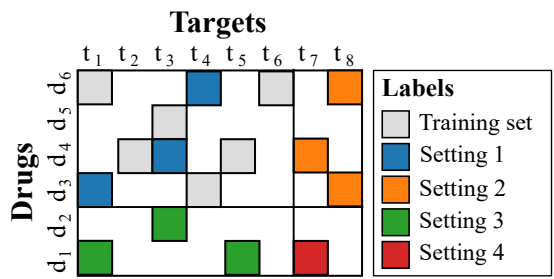}
    \caption{Illustration of a pairwise data set with (drug,target)-pairs and sparse labels. Different types of test sets corresponding to different settings are illustrated with different colors.}
    \label{drawing-2}
\end{figure}

In the literature specific settings in Figure~\ref{drawing-2} have been sometimes considered separately. For example, Setting 1 can be solved even without drug or target features using matrix factorization methods \citep{basilico2004unifying}. However, the latent representations learned by matrix factorization methods do not generalize to drugs and targets outside the training set (Settings
~2-4). The pairwise kernel learning approach considered in this work is applicable in all of the four settings.

\begin{table}[h!]
\renewcommand*{\arraystretch}{1.5}
\centering
\begin{tabular}{
 |p{0.12\columnwidth}|p{0.11\columnwidth}|p{0.65\columnwidth}| 
}
 \hline
 Setting & Task & Validation method \\ 
  \hline
 Setting 1 & \parbox[t]{\textwidth}{$\drug\in\mathcal{D}_{\textnormal{obs}}$\\ and\\ $\targ\in\mathcal{T}_{\textnormal{obs}}$} & Split samples $\mathcal{Z}_{\textnormal{obs}}=\mathcal{Z}_{\textnormal{train}}\cup\mathcal{Z}_{\textnormal{test}}$ \\
 
 Setting 2 & \parbox[t]{\textwidth}{$\drug\in\mathcal{D}_{\textnormal{obs}}$\\ and\\ $\targ\notin\mathcal{T}_{\textnormal{obs}}$} & 
 \parbox[t]{\textwidth}{Split targets $\mathcal{T}_{\textnormal{obs}}=\mathcal{T}_{\textnormal{train}}\cup\mathcal{T}_{\textnormal{test}}$.\\
 Then \\
 $
  (x_i,y_i)\in 
  \begin{cases}
    \mathcal{Z}_{\textnormal{train}}, & \textnormal{if } \trtainds_i\in\mathcal{T}_{\textnormal{train}}\\
    \mathcal{Z}_{\textnormal{test}}, & \textnormal{if } \trtainds_i\in\mathcal{T}_{\textnormal{test}}
  \end{cases}
$}
\\

 Setting 3 & \parbox[t]{\textwidth}{$\drug\notin\mathcal{D}_{\textnormal{obs}}$\\ and\\ $\targ\in\mathcal{T}_{\textnormal{obs}}$} & \parbox[t]{\textwidth}{Split drugs
 $\mathcal{D}_{\textnormal{obs}}=\mathcal{D}_{\textnormal{train}}\cup\mathcal{D}_{\textnormal{test}}$ \\
  Then \\
 $
  (x_i,y_i)\in 
  \begin{cases}
    \mathcal{Z}_{\textnormal{train}}, & \textnormal{if } \trdrinds_i\in\mathcal{D}_{\textnormal{train}}\\
    \mathcal{Z}_{\textnormal{test}}, & \textnormal{if } \trdrinds_i\in\mathcal{D}_{\textnormal{test}}
  \end{cases}
$}
\\
 Setting 4 & \parbox[t]{\textwidth}{$\drug\notin\mathcal{D}_{\textnormal{obs}}$\\ and\\ $\targ\notin\mathcal{T}_{\textnormal{obs}}$} & \parbox[t]{\textwidth}{Split both targets
 $\mathcal{T}_{\textnormal{obs}}=\mathcal{T}_{\textnormal{train}}\cup\mathcal{T}_{\textnormal{test}}$\\ and drugs
 $\mathcal{D}_{\textnormal{obs}}=\mathcal{D}_{\textnormal{train}}\cup\mathcal{D}_{\textnormal{test}}$  \\
  Then \\
 $
  (x_i,y_i)\in 
  \begin{cases}
    \mathcal{Z}_{\textnormal{train}}, & \textnormal{if } \trtainds_i\in\mathcal{T}_{\textnormal{train}} \textnormal{ and } \trdrinds_i\in\mathcal{D}_{\textnormal{train}}\\
    \mathcal{Z}_{\textnormal{test}}, & \textnormal{if } \trtainds_i\in\mathcal{T}_{\textnormal{test}} \textnormal{ and } \trdrinds_i\in\mathcal{D}_{\textnormal{test}}\\
    \mathcal{Z}_{\textnormal{ignored}}, & \textnormal{otherwise}
  \end{cases}
$}
\\
 
 \hline
\end{tabular}
\caption{Training and test set split in different settings}
\label{table:3}
\end{table}

Recent studies have highlighted, that the prediction performance and optimal choice of kernel and hyperparameters for a pairwise learning method crucially depend on the assumption of how the test pairs overlap with training data \citep{park2012flaws,pahikkala2015dti,Stock2018Algebraic}. An experimental observation made over a large variety of different studies was that Setting~1 is usually the easiest to predict accurately, followed by Settings~2 and 3, whereas making accurate predictions in Setting~4 tends to be very challenging.
As recommended in previous studies \citep{park2012flaws,pahikkala2015dti,Stock2018Algebraic}, we will always generate separate test sets for each of the four settings in the experiments to give a comprehensive view of how the learned prediction functions generalize to different types of test pairs. Depending on the amount of data, this can be implemented either with a single split to training and test sets, or by using cross-validation with repeated splits. 
The way the data splitting is implemented is defined in~Table
~\ref{table:3}.

\section{Learning algorithm}

In this section we present a supervised machine learning approach for learning with pairwise kernels. The computational shortcuts presented in this paper can be used to speed up any optimization approach whose computational complexity is dominated by multiplications of a pairwise kernel matrix with a vector, such as the truncated Newton method \citep{Airola2017gvt}. In this paper we focus on kernel ridge regression, as it is a widely used method that admits a closed form solution and simplifies the following considerations.

 To learn a prediction function, we consider the regularized empirical risk minimization problem 
\[
\predfun=\textnormal{argmin}_{\predfun\in\mathcal{H}} \left\{L(\bm{p},\bm{y}) + \dfrac{\lambda}{2} \|\predfun\|_{\mathcal{H}}^2\right\}
\]
where $\bm{p}\in\mathbb{R}^\trsize$  are the predicted outputs and $\bm{y}\in\mathbb{R}^\trsize$ the correct outputs, $L$ a convex nonnegative loss function and $\lambda > 0$ a regularization parameter. 

To define a kernel learning problem, let $\dtkf :(\mathcal{D}\times\mathcal{T})\times(\mathcal{D}\times\mathcal{T})\rightarrow\mathbb{R}$ be a positive semidefinite pairwise kernel function. Denote the kernel matrix containing the kernel evaluations between the drug-target pairs used to train the model as $\bm{K}\in\mathbb{R}^{\trsize\times\trsize}$ such that $\bm{K}_{i,j}=\dtkf((\trdrinds_i, \trtainds_i), (\trdrinds_j, \trtainds_j))$.
Choosing the reproducing kernel Hilbert space (RKHS)  associated with $\dtkf$ as the hypothesis space $\mathcal{H}$ for risk minimization, the representer theorem \citep{scholkopf2001generalized} implies that the minimizing function can be written as:
\[
\predfun(\drug,\targ)=\sum_{i=1}^{\trsize}a_i \dtkf((\trdrinds_i, \trtainds_i), (\drug,\targ))
\]
where $\bm{a}\in\mathbb{R}^\trsize$ is the vector of dual coefficients. Accordingly, the predictions for the training data can be written with the kernel matrix as $\bm{p} = \bm{K} \bm{a}$. 

Kernel ridge regression (see e.g. \citep{poggio2003mathematics}) is a special case of the regularized empirical risk minimization, where the loss function is the squared loss $L(\bm{p},\bm{y})={\|\bm{y}-\bm{p}\|}^2$. The optimization problem then has a direct solution in terms of matrix algebra. The ridge regression problem can be formulated as solving the dual parameter vector $\bm{a}\in\mathbb{R}^\trsize$:
\[
\bm{a}=\textnormal{argmin}_{\bm{a}\in\mathbb{R}^\trsize} {\|\bm{y}-\bm{K}\bm{a}\|}^2+\lambda\bm{a}^T \bm{K}\bm{a}
\]
It can be shown that this corresponds to solving the linear equation:
\begin{align}\label{linsystem}
(\bm{K}+\lambda \bm{I})\bm{a}=\bm{y}
\end{align}

Solving this system with a method that computes $\bm{K}$ requires at least $O(\trsize^2)$  time and memory, which is not practical in many pairwise learning problems, where $\trsize$ can be in the range of $10^5$ or more. A much more efficient solution can be found, when the kernel matrix can be expressed as a Kronecker product matrix. Assume we have a drug kernel function $\dkf :\mathcal{D}\times\mathcal{D}\rightarrow\mathbb{R}$ and a target kernel function $\tkf :\mathcal{T}\times\mathcal{T}\rightarrow\mathbb{R}$. The Kronecker product kernel is then defined as the product of the drug and target kernels $\dtkf((\drug,\targ),(\overline{\drug},\overline{\targ}))=\dkf(\drug,\overline{\drug})\tkf(\targ,\overline{\targ})$.

In the following considerations, we also use the following linear operator notation for the kernels. For the drug kernel, $\dkm\in\mathbb{R}
^{\mathcal{D}\times\mathcal{D}}$ such that $\dkm_{\drug,\overline{\drug}}=\dkf (\drug,\overline{\drug})$,
and for the target kernel $\tkm\in\mathbb{R}
^{\mathcal{T}\times\mathcal{T}}$ such that $\tkm_{\targ, \overline{\targ}}=\tkf (\targ,\overline{\targ})$.
For finite domains of drugs and targets, the operators can be considered as matrices, whose rows and columns are indexed with drugs and targets instead of positive integers. Their addition, scalar multiplication, transpose and Kronecker product of these operators also naturally extend to infinite and continuous domains. For example, the operator corresponding to Kronecker product kernel over drug and target kernels is 
$\dkm \otimes \tkm\in\mathbb{R}
^{(\mathcal{D}\times\mathcal{T})\times(\mathcal{D}\times\mathcal{T})}$ so that $(\dkm \otimes \tkm)_{(\drug,\targ),(\overline{\drug},\overline{\targ})}=\dkm_{\drug,\overline{\drug}} \tkm_{\targ,\overline{\targ}}$, and with the parenthesis notation we stress that both the rows and columns of the Kronecker product operator are indexed by drug-target pairs. Extending the matrix product is more involved in general but the products considered in this paper are always well-defined. This is enough for our purposes, and hence we avoid going into further technical details.

Let $\bm{R}(\bm{\trdrinds},\bm{\trtainds})\in\mathbb{R}^{\trsize\times(\mathcal{D}\times\mathcal{T})}$ denote the Kronecker product indexing operator, whose rows are indexed by a sample of $\trsize$ drug-target pairs and columns by all drug-target pairs in the space $\mathcal{D}\times\mathcal{T}$. Its values, as a function of the sequences $\bm{\trdrinds}\in\mathcal{D}^\trsize$ and $\bm{\trtainds}\in\mathcal{T}^\trsize$, are defined as follows:
\[
\bm{R}(\bm{\trdrinds},\bm{\trtainds})_{i,(\drug,\targ)} =
\begin{cases}
1 & \textnormal{if } (\drug,\targ)=(\drug_i,\targ_i)\\
0 & \textnormal{otherwise}
\end{cases}\;.
\]
Below, we omit explicitly writing $\bm{\trdrinds}$ and $\bm{\trtainds}$ for clarity when they are clear from the context. In the literature, this type of constructs are sometimes called sampling operators, as they select a finite sample from a space of possibilities.

For two samples of data, say $X=(\bm{\trdrinds},\bm{\trtainds})$ and $\overline{X}=(\overline{\bm{\trdrinds}},\overline{\bm{\trtainds}})$,
the kernel matrix containing all Kronecker product kernel evaluations between data in the first and second sample can then be expressed as $\bm{R}(\bm{\trdrinds},\bm{\trtainds}) (\dkm \otimes \tkm) \bm{R}(\overline{\bm{\trdrinds}},\overline{\bm{\trtainds}})^T$.
The second sample can be, for example, a validation sets used for selecting an appropriate value of the regularization parameter, the number of training iterations, or kernel parameter values. It can also be used for prediction performance evaluation of the final model with a separate test set or in general for performing predictions for data with unknown labels.

Substituting the kernel matrix of evaluations between the training data with itself to (\ref{linsystem}), we end up to the following linear system:
\begin{equation}\label{eq:1}
(\bm{R} (\dkm \otimes \tkm) \bm{R}^T + \lambda \bm{I} ) \bm{a} = \bm{y}
\end{equation}
This linear system can be solved iteratively, for example, with the minimal residual method \citep{saad1986gmres}, combined with early stopping. A single training iteration in Equation \ref{eq:1} requires matrix vector products of the form $\bm{u} \leftarrow (\bm{R} (\dkm \otimes \tkm) \bm{R}^T + \lambda \bm{I} ) \bm{u}$.
Given a vector of parameters $\bm{u}$, predictions for another sample of data not used in training
can be computed as a single matrix vector product $\bm{v} = \bm{R}(\overline{\bm{\trdrinds}},\overline{\bm{\trtainds}}) (\dkm \otimes \tkm) \bm{R}(\bm{\trdrinds},\bm{\trtainds})^T\bm{u}$, where  $\bm{u}\in\mathbb{R}^\trsize$  and $\bm{v}\in\mathbb{R}^\testsetsize$.

Table~\ref{table:1} presents the relevant dimensions associated to the matrix vector products. We next recollect the following result \citep{Airola2017gvt}  concerning matrix-vector products in which the matrix consists of a Kronecker product that is indexed from both left and right sides. This theorem is a generalization of Roth's column lemma \citep{roth1934columnlemma}, often known as the "vec-trick".

\begin{table}[h!]
	\centering
	\begin{tabular}{ |l|l| } 
		\hline
		$\trsize$ & the number of pairs in the first sample \\ 
		$\drsize$ & the number of unique drugs in the first sample \\ 
		$\tasize$ & the number of unique targets in the first sample \\ 
		$\overline{\trsize}$ & the number of pairs in the second sample\\ 
		$\overline{\drsize}$ & the number of unique drugs in the second sample\\ 
		$\overline{\tasize}$ & the number of unique targets  in the second sample\\ 
		\hline
	\end{tabular}
	\caption{Notation denoting the numbers of pairs, drugs and targets.}
	\label{table:1}
\end{table}

\begin{theorem}[\citet{Airola2017gvt}]\label{mainproposition}
Let 
\begin{align*}
\bm{R}(\bm{\trdrinds},\bm{\trtainds})&\in\mathbb{R}^{\trsize \times(\mathcal{D}\times\mathcal{T})}\\
\bm{R}(\overline{\bm{\trdrinds}},\overline{\bm{\trtainds}})&\in\mathbb{R}^{\overline{\trsize} \times(\mathcal{D}\times\mathcal{T})}\\
\bm{a}&\in\mathbb{R}^\trsize\\
\bm{p}&\in\mathbb{R}^{\overline{\trsize}}
\end{align*}
Then, the operation
\begin{align*}
    \bm{p}\leftarrow\bm{R}(\overline{\bm{\trdrinds}},\overline{\bm{\trtainds}})(\bm{D}\otimes \bm{T}) \bm{R}(\bm{\trdrinds},\bm{\trtainds})^T \bm{a}
\end{align*}
can be carried out in $O(\textnormal{min}(\overline{\tasize}\trsize+\drsize\overline{\trsize},\overline{\drsize}\trsize+\tasize\overline{\trsize}))$ time using a sparse Kronecker product multiplication algorithm known as the generalized vec-trick (GVT).
\end{theorem}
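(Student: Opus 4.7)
The plan is to reduce the computation to a sparse analogue of Roth's column lemma / the classical vec trick, evaluating the triple matrix product $\bm{T}\bm{A}\bm{D}^T$ for a sparse scatter $\bm{A}$ built from $\bm{a}$ rather than ever forming the $\drsize\tasize$-dimensional Kronecker-product vector. The critical observation is that $\bm{R}(\bm{\trdrinds},\bm{\trtainds})^T\bm{a}$ acts as a scatter: it places the $\trsize$ entries of $\bm{a}$ into a $\tasize\times\drsize$ matrix $\bm{A}$ whose rows and columns are indexed by the $\tasize$ unique targets and $\drsize$ unique drugs of the first sample. By the standard vec identity $(\bm{D}\otimes\bm{T})\,\mathrm{vec}(\bm{A})=\mathrm{vec}(\bm{T}\bm{A}\bm{D}^T)$, and since left multiplication by $\bm{R}(\overline{\bm{\trdrinds}},\overline{\bm{\trtainds}})$ simply gathers $\overline{\trsize}$ prescribed entries, the entire task reduces to evaluating those $\overline{\trsize}$ specified entries of $\bm{T}\bm{A}\bm{D}^T$.

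First I would expand componentwise to obtain
\[
p_i=\sum_{j=1}^{\trsize}\bm{D}_{\overline{d}_i,d_j}\,\bm{T}_{\overline{t}_i,t_j}\,a_j,
\]
and then aggregate over repeated pair positions by setting $A_{t,d}=\sum_{j:(d_j,t_j)=(d,t)}a_j$. The resulting $\bm{A}$ has at most $\trsize$ nonzeros and can be built in $O(\trsize)$ time; afterwards the target quantity takes the bilinear form $p_i=\bm{T}[\overline{t}_i,\mathcal{T}_{\textnormal{obs}}]\,\bm{A}\,\bm{D}[\overline{d}_i,\mathcal{D}_{\textnormal{obs}}]^T$ between two restricted rows of the kernel operators. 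The only remaining freedom is the order in which to associate the triple product.

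Two associativity orders will produce the two summands inside the minimum. Forming $\bm{M}=\bm{T}[\overline{\mathcal{T}}_{\textnormal{obs}},\mathcal{T}_{\textnormal{obs}}]\,\bm{A}\in\mathbb{R}^{\overline{\tasize}\times\drsize}$ first, each of the (at most) $\trsize$ nonzeros of $\bm{A}$ contributes a single scaled column of length $\overline{\tasize}$, for a cost of $O(\overline{\tasize}\trsize)$; each output $p_i=\bm{M}[\overline{t}_i,:]\,\bm{D}[\overline{d}_i,:]^T$ is then a length-$\drsize$ dot product, giving $O(\drsize\,\overline{\trsize})$ in total and $O(\overline{\tasize}\trsize+\drsize\,\overline{\trsize})$ overall. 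A symmetric analysis for the opposite order (right-multiplying first, forming $\bm{N}=\bm{A}\,\bm{D}[\mathcal{D}_{\textnormal{obs}},\overline{\mathcal{D}}_{\textnormal{obs}}]$) yields $O(\overline{\drsize}\trsize+\tasize\,\overline{\trsize})$, and choosing the cheaper of the two options gives the stated bound. The main obstacle, and the step requiring real care in a formal write-up, is the sparse bookkeeping: although $\bm{A}$ nominally lives in a $\tasize\drsize$-dimensional ambient space, the algorithm must iterate strictly over its $\trsize$ nonzero entries when forming $\bm{M}$ or $\bm{N}$, since otherwise the factor $\tasize\drsize$---which is typically $\Theta(\trsize^2)$ under the pairwise data assumption of the paper---would re-enter and defeat the whole point of the theorem.
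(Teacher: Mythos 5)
Your proposal is correct and is essentially the standard derivation of the generalized vec trick: the paper itself states this theorem without proof, importing it from \citet{Airola2017gvt}, and your reduction via the scatter matrix $\bm{A}$, the identity $(\bm{D}\otimes\bm{T})\,\mathrm{vec}(\bm{A})=\mathrm{vec}(\bm{T}\bm{A}\bm{D}^T)$, and the choice between the two association orders is exactly how that reference obtains the $O(\textnormal{min}(\overline{\tasize}\trsize+\drsize\overline{\trsize},\overline{\drsize}\trsize+\tasize\overline{\trsize}))$ bound. One minor side remark to fix: under the pairwise data assumption $\drsize\tasize$ is not $\Theta(\trsize^2)$ (it lies between $\trsize$ and $\trsize^2$, and equals $\trsize$ for complete data), but this does not affect your argument that the algorithm must iterate only over the $\trsize$ nonzeros of $\bm{A}$.
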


The theorem implies that in training, the Kronecker product kernel matrix can be multiplied with a dual parameter vector in $O(\tasize\trsize+\drsize\trsize)$ time. The cost of computing predictions simultaneously for a set of data not used for training is $O(\textnormal{min}(\overline{\tasize}\trsize+\drsize\overline{\trsize},\overline{\drsize}\trsize+\tasize\overline{\trsize}))$, where the overlined symbols denote the dimensions of the set for which the predictions are computed. This is much more efficient than the $O(\trsize^2)$ or  $O(\trsize\overline{\trsize})$ costs of explicitly forming the kernel matrices, since typically $\drsize,\tasize<<\trsize$ and $\overline{\drsize},\overline{\tasize}<<\overline{\trsize}$.



\section{Sum of Kronecker products framework for pairwise kernels}

\begin{table}[h!]
\centering
\begin{tabular}{ |p{2.5cm}|p{7.8cm}|} 

  \hline
  Kernel & $\dtkf((\drug,\targ),(\overline{\drug},\overline{\targ}))$ or $\ddkf((\drug,\drug'),(\overline{\drug},\overline{\drug'}))$ \\ 
  \hline
  Linear & $\dkf(\drug,\overline{\drug})+\tkf(\targ,\overline{\targ})$ \\ 
  Poly2D & ${(\dkf(\drug,\overline{\drug}) + \tkf(\targ,\overline{\targ}))}^2$\\
  Gaussian & $\exp(-\gamma\left\|\phi_{\mathcal{D}}(\drug)-\phi_{\mathcal{D}}(\overline{\drug})\right\|)
  \exp(-\gamma\left\|\phi_{\mathcal{T}}(\targ)-\phi_{\mathcal{T}}(\overline{\targ})\right\|)$
 \\
  Kronecker & $\dkf(\drug,\overline{\drug})\tkf(\targ,\overline{\targ})$ \\ 
  Symmetric & $\dkf(\drug,\overline{\drug})\dkf(\sdrug,\overline{\sdrug}) + \dkf(\drug,\overline{\sdrug})\dkf(\sdrug,\overline{\drug})$ \\ 
  Anti-Symmetric & $\dkf(\drug,\overline{\drug})\dkf(\sdrug,\overline{\sdrug}) - \dkf(\drug,\overline{\sdrug})\dkf(\sdrug,\overline{\drug})$  \\ 
 Ranking & $\dkf(\drug,\overline{\drug})-\dkf(\drug,\overline{\sdrug})-\dkf(\sdrug,\overline{\drug}) + \dkf(\sdrug,\overline{\sdrug})$ \\ 
  MLPK & $(\dkf(\drug,\overline{\drug})-\dkf(\drug,\overline{\sdrug})-\dkf(\sdrug,\overline{\drug}) + \dkf(\sdrug,\overline{\sdrug}))^2$  \\ 
 Cartesian & $\dkf(\drug,\overline{\drug})\delta(\targ=\overline{\targ})+ \delta(\drug=\overline{\drug})\tkf(\targ,\overline{\targ})$ \\ 
  
 \hline
\end{tabular}
\caption{Kernel functions of different pairwise kernels. We show that all of these kernels can be expressed as a combination Kronecker products of a separate drug and target kernel.}
\label{table:4b}
\end{table}

In this section we discuss different pairwise kernels presented in the literature and show how they can be expressed as sums of Kronecker products. Each matrix vector product can then be calculated as a sum of individual Knocker product terms. This allows the application of GVT shortcut to all of these kernels, which results in efficient algorithm for both training and making predictions. 

\begin{table}[h!]
\centering
\begin{tabular}{ 
|l|c|l|} 
 \hline
  Kernel &
  $\mathcal{D}\neq\mathcal{T} $ &
  $\phi_{\mathcal{D},\mathcal{T}}((\drug,\targ))$ or $\phi_{\mathcal{D},\mathcal{D}}((\drug,\drug'))$ \\
  \hline
  Linear & X & $(\phi_\mathcal{D}(\drug),\phi_\mathcal{T}(\targ))$ \\
  Poly2D & X & $(\phi_\mathcal{D}(\drug),\phi_\mathcal{T}(\targ)) \otimes (\phi_\mathcal{D}(\drug),\phi_\mathcal{T}(\targ))$ \\
  Kronecker  & X & $\phi_\mathcal{D}(\drug)\otimes\phi_\mathcal{T}(\targ)$ \\
  Symmetric  & & $\sqrt{1/2}(\phi_\mathcal{D}(\drug)\otimes\phi_\mathcal{D}(\sdrug) + \phi_\mathcal{D}(\sdrug)\otimes\phi_\mathcal{D}(\drug))$ \\
  Anti-Symmetric & & $\sqrt{1/2}(\phi_\mathcal{D}(\drug)\otimes\phi_\mathcal{D}(\sdrug) - \phi_\mathcal{D}(\sdrug)\otimes\phi_\mathcal{D}(\drug))$ \\
  Ranking & & $\phi_\mathcal{D}(\drug)-\phi_\mathcal{D}(\sdrug)$ \\
  MLPK & & $(\phi_\mathcal{D}(\drug)-\phi_\mathcal{D}(\sdrug))\otimes(\phi_\mathcal{D}(\drug)-\phi_\mathcal{D}(\sdrug))$ \\
  Cartesian & X & $(\phi_\mathcal{D}(\drug) \otimes e_\mathcal{T}, e_\mathcal{D} \otimes \phi_\mathcal{T}(\targ))$ \\

 \hline
\end{tabular}
\caption{Properties of different pairwise kernels. The middle column denotes whether the kernel allows heterogenous domains and the last column shows the feature map of the kernel.}
\label{table:4a}
\end{table}

Table \ref{table:4a} highlights an important limitation that applies to some of the kernels. These require homogeneous domains, i.e. they assume both objects in the pair belong to the same domain $\mathcal{D}=\mathcal{T}$, so that $x=(\drug,\targ)\in\mathcal{D}\times\mathcal{D}$. For the other kernels, we can have heterogeneous domains.
 Further, the Cartesian kernel is designed to be used in Setting 1 only, as it
 does not allow generalization to such drugs and targets that are not included in the training data.

The pairwise kernels can be motivated through feature mappings, since different pairwise kernel functions in Table \ref{table:4b} imply different implicit feature mappings for the pair as listed in Table \ref{table:4a}. 
The implicit drug and target feature mappings $\phi_\mathcal{D}:\mathcal{D}\rightarrow\mathbb{R}^r$  and $\phi_\mathcal{T}:\mathcal{T}\rightarrow\mathbb{R}^s$  are defined by the drug and target kernels $\dkf (\drug,\overline{\drug})=\langle \phi_\mathcal{D} (\drug),\phi_\mathcal{D} (\overline{\drug}) \rangle$  and  
$\tkf (\targ,\overline{\targ})=\langle \phi_\mathcal{T} (\targ),\phi_\mathcal{T} (\overline{\targ}) \rangle$. Then the feature vector of the pair is defined by the feature mapping $\phi_{\mathcal{D},\mathcal{T}}:\mathcal{D}\times\mathcal{T}\rightarrow\mathbb{R}^p$ corresponding to the pairwise kernel $\dtkf((\drug,\targ),(\overline{\drug},\overline{\targ}))=\langle \phi_{\mathcal{D},\mathcal{T}} (\drug,\targ),\phi_{\mathcal{D},\mathcal{T}} (\overline{\drug},\overline{\targ}) \rangle$. The claimed feature maps can be proven simply by computing the inner product and checking that it matches the definition of the kernel function. In the following, we discuss the implied pairwise feature vector $\phi_{\mathcal{D},\mathcal{T}}((\drug,\targ)):=(x_1^{\drug,\targ},...,x_p^{\drug,\targ})\in\mathbb{R}^p$ of each pairwise kernel in terms of the drug $\phi_{\drug}(\drug):=(x_1^\drug,...,x_r^\drug)\in\mathbb{R}^r$ and the target $\phi_{\targ}(\targ):=(x_1^\targ,...,x_s^\targ)\in\mathbb{R}^s$ feature vectors. This motivates the kernels and demonstrates the intuition behind using a specific kernel for a specific task.

\subsection{Linear}
The pairwise linear kernel is computed as the linear kernel on the concatenated feature vector. The feature vector is the concatenation of the drug and target feature vectors $\bm{x}^{\drug,\targ}=(\bm{x}^{\drug},\bm{x}^{\targ})$. The resulting features consists of the union of original drug features $(x_i^\drug)_{i=1...r}$  and target features $(x_i^\targ)_{i=1...s}$. In this feature mapping, each feature contributes equally to interaction strength in every drug and target pair. Interaction is predicted simply by the presence or absence of certain features in the drug or the target, regardless of which drug and target pair is being tested. Given drug $d$ and target $t$, the predicted interaction of the drug on the target is given by $\predfun(\drug,\targ)=\langle \bm{w}^\drug, \bm{x}^\drug \rangle + \langle \bm{w}^\targ, \bm{x}^\targ \rangle$.  This implies a global ordering of drugs, where drugs and targets are completely decoupled. If drug $\drug_1$ is more effective than drug $\drug_2$ against target $\targ_1$, then drug $\drug_1$ is also more effective than drug $\drug_2$ against target $\targ_2$: $\predfun(\drug_1,\targ_1 )>\predfun(\drug_2,\targ_1 )\Longrightarrow(\drug_1,\targ_2 )>\predfun(\drug_2,\targ_2 )$. In the resulting model, some drugs and targets simply have more interactions than others, but there are no interactions between drug and target features. The artificial chessboard problem illustrated in Figure~\ref{chess} is an example of a data set, that is impossible to model using the pairwise linear kernel. 

\subsection{Polynomial}
The pairwise polynomial kernel is computed as the polynomial kernel on the concatenated feature vector.  On a second degree polynomial kernel without bias, the feature vector is the tensor product of the concatenated feature vector with itself $\bm{x}^{\drug,\targ}=(\bm{x}^{d},\bm{x}^{t})\otimes(\bm{x}^{d},\bm{x}^{t})$. The resulting features include three types of terms: self interactions between drug features $(x_i^d x_j^d)_{i=1...r,j=1...r}$, pairwise interactions between drug and target features $(x_i^d x_j^t )_{i=1...r,j=1...s}$, and self interactions between target features $(x_i^t x_j^t)_{i=1...s,j=1...s}$. The self interactions contribute to a global ordering of drugs and targets, similar to the linear kernel. However, the pairwise interactions model actual interactions of drug and target features: a drug and target pair may be interacting if for example the features indicate that a certain chemical structure in a drug binds to a certain receptor on a target.

\subsection{Gaussian}

The pairwise Gaussian kernel is defined as the Gaussian kernel on the concatenated feature vector. This kernel $\exp(-\gamma\left\|(\bm{x}^{d},\bm{x}^{t})-(\overline{\bm{x}}^{d},\overline{\bm{x}}^{t})\right\|)=
\exp(-\gamma\left\|\bm{x}^{d}-\overline{\bm{x}}^{d}\right\|)
\exp(-\gamma\left\|\bm{x}^{t}-\overline{\bm{x}}^{t}\right\|)$ can be expressed as product of Gaussian drug and target kernels. This is a special case of the Kronecker product kernel, and will thus not be considered separately in the following.

\subsection{Kronecker product}
The Kronecker product kernel \citep{Benhur2005,basilico2004unifying,oyama2004} is computed as the product of drug and target kernels. The feature vector is given as a tensor product of the drug and target feature vectors $\bm{x}^{\drug,\targ}=\bm{x}^{d}\otimes\bm{x}^{t}$. The resulting feature vector consists of simply all the pairwise interactions $(x_i^d x_j^t )_{i=1...r,j=1...s}$. These are same as the pairwise interactions in the polynomial kernel with self-interations excluded. The Kronecker product kernel can be motivated as the simplest kernel that models actual pairwise interactions in drug and target features. The Kronecker kernel is an universal kernel, if the drug and target kernels are universal (e.g. Gaussian) \citep{waegeman2012learninggraded}.

\subsection{Symmetric and Anti-symmetric kernels}
If we assume homogeneous domains, feature vectors can be written as a sum of symmetric and anti-symmetric parts
$\phi_{\mathcal{D},\mathcal{D}}((\drug,\drug'))=1/2(\phi_{\mathcal{D},\mathcal{D}}((\drug,\drug'))+\phi_{\mathcal{D},\mathcal{D}}((\drug',\drug)))+1/2(\phi_{\mathcal{D},\mathcal{D}}((\drug,\drug'))-\phi_{\mathcal{D},\mathcal{D}}((\drug',\drug)))$.
The symmetric Kronecker kernel \citep{Benhur2005} is motivated by applying the symmetrization to the Kronecker kernel feature vector. This results in a tensor product of the drug and target feature vectors with only symmetric parts $\bm{x}^{\drug,\drug'}=1/2(\bm{x}^{d}\otimes\bm{x}^{d'}+\bm{x}^{d'}\otimes\bm{x}^{d})$. The resulting features consist of all symmetric pairwise interactions $(1/2(x_i^d x_j^{d'}+x_i^{d'} x_j^\drug))_{i=1...r,j=1...r}$. When all interactions are known to be symmetric by definition, the symmetric Kronecker kernel is sometimes referred to as the Kronecker kernel in the literature. Several works have analysed the theoretical properties of the symmetric and antisymmetric Kronecker kernels
 \citep{pahikkala2010reciprocalkm,waegeman2012learninggraded,brunner2012pairwise,pahikkala2015spectral,Gnecco2017symmetry,Gnecco2018symmetric}.

\subsection{Ranking}
The feature vector of the ranking kernel is the difference of drug and target feature vectors $\bm{x}^{\drug,\drug'}=\bm{x}^{d}-\bm{x}^{d'}$, which are assumed to belong to the same domain \citep{herbrich2000large,waegeman2012learninggraded}.  The resulting features consist of pairwise differences $(x_i^d - x_i^{d'} )_{i=1...r}$. The ranking kernel can model ranking representable relations, i.e. relations constructed from some utility function h such that $\predfun(\drug,\drug')=h(\drug) - h(\drug')$. For the ranking kernel  $\predfun(\drug,\drug')=\langle \bm{w}, \bm{x}^d \rangle - \langle \bm{w}, \bm{x}^{d'} \rangle$, which provides a global ranking of drugs based on their feature representation.
The ranking kernel can be considered as an anti-symmetric linear kernel as can be observed from the operator notation below.

\citet{pahikkala2009efficient} show that the ranking kernel matrix can be computed using the oriented incidence operator
$\bm{M}\in\mathbb{R}^{\mathcal{D}\times n}$ where
\[
\bm{M}_{\drug,(\drug_i,\drug'_i)} =
\begin{cases}
1 & \textnormal{if } \drug_i=\drug\\
-1 & \textnormal{if } \drug'_i=\drug\\
0 & \textnormal{otherwise}
\end{cases}\;.
\]
as $\bm{M}\transpose\bm{D}\bm{M}$. Since $\bm{M}$ can be implemented with a sparse matrix, this allows efficient kernel matrix vector multiplication in $O(m^2+n)$ time without need to use GVT. 

\subsection{MLPK}
The MLPK kernel \citep{vert2007new} is computed as the ranking kernel squared. The feature vector is given by the tensor product of the pairwise difference vector with itself $\bm{x}^{\drug,\drug'}=(\bm{x}^{d}-\bm{x}^{d'})\otimes(\bm{x}^{d}-\bm{x}^{d'})$. The features consists of all pairwise interactions of pairwise differences $((x_i^d - x_i^{d'})(x_j^d - x_j^{d'}))_{i=1...r,j=1...r}$. This models interaction of a pair in the terms of how similar the drug and the target in the pair are. The formulation compares both elementwise differences, and possible interactions between the differences. The MLPK kernel can also be motivated as a distance learning problem by adding an extra parameter constraint to the standard SVM optimization problem \citep{vert2007new}. There, the goal is to learn a linear map such that the function is modelled by the Euclidean distance metric between feature vectors: learn a positive semidefinite matrix $\bm{M}$ such that $\predfun(\drug,\drug')=(\bm{x}^{d}-\bm{x}^{d'})^T \bm{M} (\bm{x}^{d}-\bm{x}^{d'})$.

\subsection{Cartesian}
The Cartesian kernel \citep{kashima2009pairwise} is computed as the drug kernel when the targets match, and the target kernel when the drugs match. The feature vector is given as a concatenation of the drug feature vector (target specific) and the target feature vector (drug specific) $\bm{x}^{\drug,\targ}=(\bm{x}^{d}\otimes e_\targ, e_\drug\otimes\bm{x}^{t})$. The resulting features are sparse with nonzero terms $(x_i^d\delta(\targ=\targ_j))_{i=1...r,j=1...q}$ and $(\delta(\drug=\drug_i)x_j^t)_{i=1...m,j=1...s}$ corresponding to drug and target specific features. The full parameter vector $\bm{w}$ can be partitioned into drug specific $(\bm{w}^d)_{d\in\mathcal{D}}$  and target specific $(\bm{w}^t)_{t\in\mathcal{T}}$  parameters, with separate parameters learned for each drug and target. This means that target features may have different effects, depending on the drug, and vice versa. In this sense the learned model includes pairwise interactions, but it does not utilize information between similar interactions in different pairs and cannot generalize to drugs or targets that have not been seen in the training set. \citet{kashima2009pairwise} show that the Cartesian kernel can be represented as a Kronecker sum, and thus using the standard vec trick \citep{roth1934columnlemma} kernel matrix multiplication can be done in $O(m^2q+q^2m)$ time. In this work, we improve on this result.

\subsection{Efficient computation of pairwise kernels}

In this section we show how the pairwise kernel matrices of the above described kernels can be conveniently written as sums of Kronecker product matrices. For this purpose, we make the following definitions.
\begin{definition}[Commutation and unification operators]\label{commdef}
The commutation operator
$\bm{P}\in\mathbb{R}^{(\mathcal{T}\times\mathcal{D})\times(\mathcal{D}\times\mathcal{T})}$ has its values defined as
\[
\bm{P}_{(\targ,\drug),(\overline{\drug},\overline{\targ})}=\left\{
\begin{array}{ll}
1&\textnormal{if }\drug=\overline{\drug}\textnormal{ and }\targ=\overline{\targ}\\
0&\textnormal{otherwise}
\end{array}
\right.\;.
\]
Note that if the domains $\mathcal{D}$ and $\mathcal{T}$ are different, the row indexing of any operator will be changed from $\mathcal{D}\times\mathcal{T}$ to $\mathcal{T}\times\mathcal{D}$ if multiplied from left with $\bm{P}$. Its inverse operator $\bm{P}^{T}\in\mathbb{R}^{(\mathcal{D}\times\mathcal{T})\times(\mathcal{T}\times\mathcal{D})}$ is defined analogously, by switching the drug and target domains. The values are also defined similarly when $\mathcal{D}=\mathcal{T}$ but in this case we use the notation
\[
\bm{P}_{(\sdrug,\drug),(\overline{\drug},\overline{\sdrug})}=\left\{
\begin{array}{ll}
1&\textnormal{if }\drug=\overline{\drug}\textnormal{ and }\sdrug=\overline{\sdrug}\\
0&\textnormal{otherwise}
\end{array}
\right.\;.
\]

The unification operator $\bm{Q}\in\mathbb{R}^{(\mathcal{D}\times\mathcal{T})\times(\mathcal{D}\times\mathcal{D})}$ has its values defined as:
\[
\bm{Q}_{(\drug,\targ),(\overline{\drug},\overline{\sdrug})}=
\left\{\begin{array}{ll}
1&\textnormal{if }\drug=\overline{\drug}=\overline{\sdrug}\\
0&\textnormal{otherwise}
\end{array}\right.\;.
\]
The corresponding unification operator $\bm{Q}\in\mathbb{R}^{(\mathcal{T}\times\mathcal{D})\times(\mathcal{T}\times\mathcal{T})}$ is defined analogously, by switching the drug and target domains. The values are also defined similarly when $\mathcal{D}=\mathcal{T}$ but in this case we use the notation
\[
\bm{Q}_{(\drug,\sdrug),(\overline{\drug},\overline{\sdrug})}=
\left\{\begin{array}{ll}
1&\textnormal{if }\drug=\overline{\drug}=\overline{\sdrug}\\
0&\textnormal{otherwise}
\end{array}\right.\;.
\]
For convenience, we also give the values of the product of operators $\bm{PQ}\in\mathbb{R}^{(\mathcal{D}\times\mathcal{T})\times(\mathcal{T}\times\mathcal{T})}$:
\[
\bm{PQ}_{(\drug,\targ),(\overline{\targ},\overline{\targ'})}=
\left\{\begin{array}{ll}
1&\textnormal{if }\targ=\overline{\targ}=\overline{\targ'}\\
0&\textnormal{otherwise}
\end{array}\right.
\]
as this product is also heavily used in the forthcoming considerations.
\end{definition}

In the following example, we illustrate finite dimensional examples of both commutation and unification operators that are, due to their finiteness, representable as matrices.
\begin{example}
Consider a finite space of drugs of size $\arrowvert\mathcal{D}\arrowvert=3$ and a finite space of targets $\arrowvert\mathcal{T}\arrowvert=2$. Then, the commutation operator $\bm{P}\in\mathbb{R}^{(\mathcal{T}\times\mathcal{D})\times(\mathcal{D}\times\mathcal{T})}$ can be represented as the following matrix:
\begin{eqnarray*}
\bm{P}
=\left(
\begin{array}{ccccccc}
1 &0 &0 &0 &0 &0\\
0 &0 &1 &0 &0 &0\\
0 &0 &0 &0 &1 &0\\
0 &1 &0 &0 &0 &0\\
0 &0 &0 &1 &0 &0\\
0 &0 &0 &0 &0 &1\\
\end{array}
\right)\;,
\end{eqnarray*}
where rows and columns are arranged according to the natural order of the target-drug and drug-target pairs, respectively. This is in the literature known as the commutation matrix (see e.g. \citet{magnus1979commutation}). The unification operator $\bm{Q}\in\mathbb{R}^{(\mathcal{D}\times\mathcal{T})\times(\mathcal{D}\times\mathcal{D})}$ can be represented as the matrix
\begin{eqnarray*}
\bm{Q}
=\left(
\begin{array}{cccccccccc}
1 &0 &0 &0 &0 &0 &0 &0 &0\\
1 &0 &0 &0 &0 &0 &0 &0 &0\\
0 &0 &0 &0 &1 &0 &0 &0 &0\\
0 &0 &0 &0 &1 &0 &0 &0 &0\\
0 &0 &0 &0 &0 &0 &0 &0 &1\\
0 &0 &0 &0 &0 &0 &0 &0 &1\\
\end{array}
\right)\;,
\end{eqnarray*}
where the rows and columns are arranged in  the natural order of the drug-target pairs and drug-drug pairs, respectively.
\end{example}

From the above definition of the commutation and unification operators, we obtain a cheat sheet of rules indicated by the following lemma:
\begin{theorem}
For $\bm{P}\in\mathbb{R}^{(\mathcal{T}\times\mathcal{D})\times(\mathcal{D}\times\mathcal{T})}$, we have
\begin{align*}
\bm{P}\bm{P}^{T}&=\bm{P}^{T}\bm{P}=\bm{I}\\
\bm{P}[\bm{D}\otimes\bm{T}]&=[\bm{T}\otimes\bm{D}]\bm{P}
\\
\bm{P}[\bm{D}\otimes\bm{T}]\bm{P}^{T}&=[\bm{T}\otimes\bm{D}]
\end{align*}
And for $\bm{P}\in\mathbb{R}^{(\mathcal{D}\times\mathcal{D})\times(\mathcal{D}\times\mathcal{D})}$
\begin{align*}
\bm{P}&=\bm{P}^{T}
\\
\bm{P}[\bm{D}\otimes\bm{D}]&=[\bm{D}\otimes\bm{D}]\bm{P}
\\
\bm{P}[\bm{D}\otimes\bm{D}]\bm{P}&=[\bm{D}\otimes\bm{D}]
\end{align*}
Further, for $\bm{Q}\in\mathbb{R}^{(\mathcal{T}\times\mathcal{D})\times(\mathcal{D}\times\mathcal{D})}$, we have
\begin{align*}
\bm{Q}[\bm{D}\otimes\bm{D}]\bm{Q}^T&=[\bm{D}^{\odot 2}\otimes\bm{1}]\;.
\end{align*}
where $\bm{D}^{\odot 2}$ denotes the elementwise square of $\bm{D}$, and $\bm{1}\in\mathbb{R}^{\mathcal{T}\times\mathcal{T}}$ is an operator with all values equal to one.

For the values, we have
\begin{align*}
[\bm{Q}(\bm{D}\otimes \bm{D})\bm{Q}^T]_{(\drug,\targ),(\overline{\drug},\overline{\targ})}
&= (\bm{D}\otimes \bm{D})_{(\drug,\drug),(\overline{\drug},\overline{\drug})}\;,
\end{align*}
where $\bm{Q}\in\mathbb{R}^{(\mathcal{D}\times\mathcal{T})\times(\mathcal{D}\times\mathcal{D})}$,
and
\begin{align*}
[\bm{P}\bm{Q}(\bm{T}\otimes \bm{T})\bm{Q}^{T}\bm{P}^T]_{(\drug,\targ),(\overline{\drug},\overline{\targ})}
&= (\bm{T}\otimes \bm{T})_{(\targ,\targ),(\overline{\targ},\overline{\targ})}\;,
\end{align*}
where $\bm{Q}\in\mathbb{R}^{(\mathcal{D}\times\mathcal{T})\times(\mathcal{T}\times\mathcal{T})}$.

Finally, if $\mathcal{D}=\mathcal{T}$, we further have:
\begin{align*}
(\bm{D}\otimes \bm{D})_{(\drug,\sdrug),(\overline{\drug},\overline{\sdrug})}
&= (\bm{D}\otimes \bm{D})_{(\sdrug,\drug),(\overline{\sdrug},\overline{\drug})}
\\
(\bm{P}(\bm{D}\otimes \bm{D}))_{(\drug,\sdrug),(\overline{\drug},\overline{\sdrug})}
&= (\bm{D}\otimes \bm{D})_{(\sdrug,\drug),(\overline{\drug},\overline{\sdrug})}
\\
(\bm{Q}(\bm{D}\otimes \bm{D}))_{(\drug,\sdrug),(\overline{\drug},\overline{\sdrug})}
&= (\bm{D}\otimes \bm{D})_{(\drug,\drug),(\overline{\drug},\overline{\sdrug})}
\\
((\bm{D}\otimes \bm{D})\bm{Q}^{T})_{(\drug,\sdrug),(\overline{\drug},\overline{\sdrug})}
&= (\bm{D}\otimes \bm{D})_{(\drug,\sdrug),(\overline{\drug},\overline{\drug})}
\\
(\bm{P}\bm{Q}(\bm{D}\otimes \bm{D}))_{(\drug,\sdrug),(\overline{\drug},\overline{\sdrug})}
 &= (\bm{D}\otimes \bm{D})_{(\sdrug,\sdrug),(\overline{\drug},\overline{\sdrug})}
\\
((\bm{D}\otimes \bm{D})\bm{Q}^T\bm{P})_{(\drug,\sdrug),(\overline{\drug},\overline{\sdrug})}
&= (\bm{D}\otimes \bm{D})_{(\drug,\drug),(\overline{\sdrug},\overline{\sdrug})}
\\
(\bm{Q}(\bm{D}\otimes \bm{D})\bm{Q}^{T})_{(\drug,\sdrug),(\overline{\drug},\overline{\sdrug})}
&= (\bm{D}\otimes \bm{D})_{(\drug,\drug),(\overline{\drug},\overline{\drug})}
\\
(\bm{P}\bm{Q}(\bm{D}\otimes \bm{D})\bm{Q}^{T})_{(\drug,\sdrug),(\overline{\drug},\overline{\sdrug})}
&= (\bm{D}\otimes \bm{D})_{(\sdrug,\sdrug),(\overline{\drug},\overline{\drug})}
\\
(\bm{Q}(\bm{D}\otimes \bm{D})\bm{Q}^T\bm{P})_{(\drug,\sdrug),(\overline{\drug},\overline{\sdrug})}
&= (\bm{D}\otimes \bm{D})_{(\drug,\drug),(\overline{\sdrug},\overline{\sdrug})}
\\
(\bm{P}\bm{Q}(\bm{D}\otimes \bm{D})\bm{Q}^T\bm{P})_{(\drug,\sdrug),(\overline{\drug},\overline{\sdrug})}
&= (\bm{D}\otimes \bm{D})_{(\sdrug,\sdrug),(\overline{\sdrug},\overline{\sdrug})}
\end{align*}
\end{theorem}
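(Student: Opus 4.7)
The overall approach is direct entrywise verification: each operator identity is checked by expanding the relevant matrix product as a sum over intermediate pair indices and then collapsing that sum against the indicator values of $\bm{P}$ and $\bm{Q}$ given in Definition~\ref{commdef}. Since $\bm{P}$ and $\bm{Q}$ are both $\{0,1\}$-valued and each row/column has very few nonzeros, every such sum reduces to a single term, after which both sides can be read off as identical expressions in $\bm{D}$, $\bm{T}$, and pair indices. No further machinery is needed beyond the sampling-operator semantics from Definition~\ref{commdef} and the definition of the Kronecker product $(\bm{D}\otimes\bm{T})_{(\drug,\targ),(\overline{\drug},\overline{\targ})}=\bm{D}_{\drug,\overline{\drug}}\bm{T}_{\targ,\overline{\targ}}$.

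For the first block on $\bm{P}\in\mathbb{R}^{(\mathcal{T}\times\mathcal{D})\times(\mathcal{D}\times\mathcal{T})}$, the orthogonality $\bm{P}\bm{P}^{T}=\bm{P}^{T}\bm{P}=\bm{I}$ follows immediately from the fact that each row and each column of $\bm{P}$ contains exactly one nonzero entry. For the commutation rule, I compute $[\bm{P}(\bm{D}\otimes\bm{T})]_{(\targ,\drug),(\overline{\drug},\overline{\targ})}=\bm{D}_{\drug,\overline{\drug}}\bm{T}_{\targ,\overline{\targ}}$ and $[(\bm{T}\otimes\bm{D})\bm{P}]_{(\targ,\drug),(\overline{\drug},\overline{\targ})}=\bm{T}_{\targ,\overline{\targ}}\bm{D}_{\drug,\overline{\drug}}$ and observe they coincide. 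The conjugation identity then follows by right-multiplying the commutation identity by $\bm{P}^{T}$ and invoking orthogonality. The three parallel identities for $\bm{P}\in\mathbb{R}^{(\mathcal{D}\times\mathcal{D})\times(\mathcal{D}\times\mathcal{D})}$ use exactly the same computation; the additional symmetry $\bm{P}=\bm{P}^{T}$ is clear from the fact that in the homogeneous case the nonzero pattern $\{(\sdrug,\drug),(\overline{\drug},\overline{\sdrug})\colon \drug=\overline{\drug},\ \sdrug=\overline{\sdrug}\}$ is invariant under swapping the row-pair with the column-pair.

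For the unification identity, expanding $[\bm{Q}(\bm{D}\otimes\bm{D})\bm{Q}^{T}]_{(\drug,\targ),(\overline{\drug},\overline{\targ})}$ as a double sum over the intermediate drug-drug pairs and invoking the $\bm{Q}$-indicator forces both intermediate pairs to be diagonal, namely $(\drug,\drug)$ and $(\overline{\drug},\overline{\drug})$. The sum thus collapses to $\bm{D}_{\drug,\overline{\drug}}\bm{D}_{\drug,\overline{\drug}}=\bm{D}_{\drug,\overline{\drug}}^{2}$, which is independent of $\targ,\overline{\targ}$ and hence equals $[\bm{D}^{\odot 2}\otimes\bm{1}]_{(\drug,\targ),(\overline{\drug},\overline{\targ})}$; the two stated value identities $(\bm{D}\otimes\bm{D})_{(\drug,\drug),(\overline{\drug},\overline{\drug})}$ and $(\bm{T}\otimes\bm{T})_{(\targ,\targ),(\overline{\targ},\overline{\targ})}$ are simply reparametrizations of this collapse, with a conjugating $\bm{P}$ merely swapping which coordinate of the pair is duplicated.

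Finally, the long list of value identities under $\mathcal{D}=\mathcal{T}$ is a mechanical bookkeeping exercise: every identity is of the form $(M)_{(\drug,\sdrug),(\overline{\drug},\overline{\sdrug})}=(\bm{D}\otimes\bm{D})_{(u,v),(\overline{u},\overline{v})}$, where the quadruple $(u,v,\overline{u},\overline{v})$ is obtained from $(\drug,\sdrug,\overline{\drug},\overline{\sdrug})$ by applying the rule \emph{left multiplication by $\bm{Q}$ duplicates the row's first component, right multiplication by $\bm{Q}^{T}$ duplicates the column's first component, and multiplication by $\bm{P}$ swaps the two components on its side}. Each individual line is then verified by expanding the product, reading the $\bm{P}$ and $\bm{Q}$ deltas to fix the indices, and extracting the surviving $\bm{D}\otimes\bm{D}$ entry. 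The main obstacle is not mathematical but notational: keeping track of which primed, barred, or unbarred index each $\bm{P}$ or $\bm{Q}$ factor acts on, particularly when both left and right factors are present (as in the last four lines), and ensuring that the asymmetric placement of the bars on the right-hand side matches the side on which $\bm{Q}$ or $\bm{P}\bm{Q}$ was applied.
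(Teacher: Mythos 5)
Your proposal is correct and follows exactly the route the paper intends: the paper's own proof consists of the single sentence that the results are ``straightforward operator algebraic manipulations based on Definition~\ref{commdef}'', and your entrywise expansion---collapsing each sum against the $\{0,1\}$ indicators of $\bm{P}$ and $\bm{Q}$, together with the duplicate/swap bookkeeping rule for composing them---is precisely that manipulation carried out explicitly. One remark: applying your (correct) rule to the line for $(\bm{D}\otimes\bm{D})\bm{Q}^T\bm{P}$ gives row index $(\drug,\sdrug)$ on the right-hand side rather than the $(\drug,\drug)$ printed in the theorem, since right multiplication cannot alter the row index (consistent with the paper's own preceding line for $(\bm{D}\otimes\bm{D})\bm{Q}^T$), so your systematic verification would in fact expose an apparent typo in the paper's statement.
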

\begin{proof}
The listed results are straightforward operator algebraic manipulations based on Definition~\ref{commdef}.\qed
\end{proof}
From the above results, we can conclude the following results concerning certain specific pairwise kernels in particular:
\begin{corollary}\label{pairwisetheorem}
The kernel matrices of the linear, second order polynomial, Kronecker product, Cartesian, symmetric, anti-symmetric, ranking and metric learning pairwise kernels for two samples of data, say $X=(\bm{\trdrinds},\bm{\trtainds})$ and $\overline{X}=(\overline{\bm{\trdrinds}},\overline{\bm{\trtainds)}}$, can be expressed as $\bm{R}(\overline{\bm{\trdrinds}},\overline{\bm{\trtainds}})\bm{K}_{\mathcal{D},\mathcal{T}}\bm{R}(\bm{\trdrinds},\bm{\trtainds})$, where $\bm{K}_{\mathcal{D},\mathcal{T}}$ is the corresponding operator of all kernel values as follows:

\begin{tabular}{ |p{2.5cm}|p{7.9cm}| } 
  \hline
  Kernel & $\bm{K}_{\mathcal{D},\mathcal{T}}\in\mathbb{R}^{(\mathcal{D}\times\mathcal{T})\times(\mathcal{D}\times\mathcal{T})}$ or $\bm{K}_{\mathcal{D},\mathcal{D}}\in\mathbb{R}^{(\mathcal{D}\times\mathcal{D})\times(\mathcal{D}\times\mathcal{D})}$ \\ 
  \hline
  Linear & $
  \dkm\otimes \bm{1} + \bm{1} \otimes \tkm$ \\ 
  Poly2D & ${\bm{Q}(\bm{D}\otimes \bm{D})\bm{Q}^T + 2 \bm{D} \otimes \bm{T} + \bm{P}\bm{Q}(\bm{T} \otimes \bm{T})\bm{Q}^T\bm{P}}$ \\ 
  Kronecker & $\dkm \otimes \tkm$ \\ 
  Cartesian & $\dkm \otimes \bm{I} + \bm{I} \otimes \tkm $ \\
  Symmetric & $ (\bm{P}+\bm{I}) (\dkm \otimes \dkm)$ \\ 
  Anti-Symmetric & $(\bm{P}-\bm{I}) (\dkm \otimes \dkm)$ \\ 
  Ranking & $(\bm{I}-\bm{P})(\bm{D} \otimes \bm{1})(\bm{I}-\bm{P})$\\ 
  MLPK & $(\bm{I}+\bm{P})(\bm{I}-\bm{Q})(\bm{D}\otimes \bm{D})(\bm{I}-\bm{Q})^T(\bm{I}+\bm{P})$\\
 \hline
\end{tabular}
Their products with vectors can be computed with GVT in $O(\textnormal{min}(\overline{\tasize}\trsize+\drsize\overline{\trsize},\overline{\drsize}\trsize+\tasize\overline{\trsize}))$ time.
\end{corollary}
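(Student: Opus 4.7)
The plan is to split the corollary into two parts: (i) establish, for each kernel listed in the table, the operator identity defining $\bm{K}_{\mathcal{D},\mathcal{T}}$ by verifying entrywise equality with the corresponding closed-form kernel function in Table~\ref{table:4b}; and (ii) bound the cost of computing $\bm{R}(\overline{\bm{\trdrinds}},\overline{\bm{\trtainds}})\bm{K}_{\mathcal{D},\mathcal{T}}\bm{R}(\bm{\trdrinds},\bm{\trtainds})^T\bm{a}$ by reducing it to a constant number of GVT calls and invoking Theorem~\ref{mainproposition}. Once (i) is in place, part (ii) is essentially just counting terms.

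For part (i), the three base cases are immediate from the definition of the Kronecker product: $(\dkm\otimes \tkm)_{(\drug,\targ),(\overline{\drug},\overline{\targ})}=\dkm_{\drug,\overline{\drug}}\tkm_{\targ,\overline{\targ}}$ gives the Kronecker kernel, replacing $\tkm$ or $\dkm$ by $\bm{1}$ yields the two summands of the linear kernel, and replacing $\bm{1}$ by $\bm{I}$ yields the Cartesian kernel. For Poly2D, I would expand $(\dkf(\drug,\overline{\drug})+\tkf(\targ,\overline{\targ}))^2=\dkf^2+2\dkf\tkf+\tkf^2$ and match each term using the identities of the preceding theorem: $[\bm{Q}(\dkm\otimes \dkm)\bm{Q}^T]_{(\drug,\targ),(\overline{\drug},\overline{\targ})}=\dkm_{\drug,\overline{\drug}}^2$ for the first, $\dkm\otimes \tkm$ for the second, and $[\bm{P}\bm{Q}(\tkm\otimes \tkm)\bm{Q}^T\bm{P}^T]_{(\drug,\targ),(\overline{\drug},\overline{\targ})}=\tkm_{\targ,\overline{\targ}}^2$ for the third. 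For the symmetric and anti-symmetric kernels, the identity $(\bm{P}(\dkm\otimes \dkm))_{(\drug,\sdrug),(\overline{\drug},\overline{\sdrug})}=(\dkm\otimes \dkm)_{(\sdrug,\drug),(\overline{\drug},\overline{\sdrug})}$ shows that $\bm{P}(\dkm\otimes \dkm)$ swaps $\drug\leftrightarrow\sdrug$ on the left, producing the second summand of the (anti-)symmetric Kronecker kernel. For the ranking kernel, expanding $(\bm{I}-\bm{P})(\dkm\otimes \bm{1})(\bm{I}-\bm{P})$ yields four signed copies of $(\dkm\otimes \bm{1})$ with the row and column $\drug,\sdrug$ coordinates toggled between swapped and unswapped, which exactly matches $\dkf(\drug,\overline{\drug})-\dkf(\drug,\overline{\sdrug})-\dkf(\sdrug,\overline{\drug})+\dkf(\sdrug,\overline{\sdrug})$. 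For MLPK, the analogous expansion of $(\bm{I}+\bm{P})(\bm{I}-\bm{Q})(\dkm\otimes \dkm)(\bm{I}-\bm{Q})^T(\bm{I}+\bm{P})$ produces the sixteen signed monomials that also arise from squaring the four-term ranking expression, and the eight lines of ``$\bm{P}\bm{Q}(\dkm\otimes \dkm)\bm{Q}^T\bm{P}$-type'' identities at the end of the preceding theorem handle exactly this combinatorics.

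For part (ii), observe that every entry in the table is a fixed-length linear combination of operators of the form $\bm{L}_1(\bm{A}\otimes \bm{B})\bm{L}_2$ where each $\bm{L}_i\in\{\bm{I},\bm{P},\bm{Q},\bm{P}\bm{Q}\}$ and each of $\bm{A},\bm{B}$ is one of $\dkm,\tkm,\bm{I},\bm{1}$. The key simplification is that $\bm{R}(\bm{\trdrinds},\bm{\trtainds})\bm{P}^T$ is itself the sampling operator $\bm{R}(\bm{\trtainds},\bm{\trdrinds})$ for the index-swapped sample, and $\bm{R}(\bm{\trdrinds},\bm{\trtainds})\bm{Q}$ (respectively $\bm{R}(\bm{\trdrinds},\bm{\trtainds})\bm{P}\bm{Q}$) coincides with a sampling operator $\bm{R}(\bm{\trdrinds},\bm{\trdrinds})$ on $\mathcal{D}\times\mathcal{D}$ (respectively $\bm{R}(\bm{\trtainds},\bm{\trtainds})$ on $\mathcal{T}\times\mathcal{T}$), as follows immediately from Definition~\ref{commdef}. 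Consequently each summand can be rewritten in the form $\bm{R}'(\overline{X})(\bm{A}\otimes \bm{B})\bm{R}''(X)^T$ with $\bm{R}',\bm{R}''$ both sampling operators (possibly with their drug and target coordinates reindexed), so Theorem~\ref{mainproposition} applies directly and bounds the per-term multiplication cost by $O(\min(\overline{\tasize}\trsize+\drsize\overline{\trsize},\overline{\drsize}\trsize+\tasize\overline{\trsize}))$; summing over the constant number of terms gives the claimed overall complexity. The main obstacle is the MLPK bookkeeping: the five-factor product expands into sixteen matched Kronecker summands whose coefficients and row/column labelings must be tracked against the sixteen monomials of the squared ranking expression, which amounts to writing out a table of which of the eight $(\bm{P},\bm{Q})$-sandwich identities applies to each expanded term; everything else is routine operator algebra.
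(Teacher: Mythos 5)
Your proposal matches the paper's own proof in both structure and substance: part (i) is the same entrywise verification of each operator identity against the kernel function (using the identities of the preceding theorem, including the same expansion of Poly2D into three terms, the $\bm{P}$-swap for the (anti-)symmetric kernels, and the sixteen-term expansion for MLPK), and part (ii) is exactly the paper's reduction via $\bm{R}(\bm{\trdrinds},\bm{\trtainds})\bm{P}=\bm{R}(\bm{\trtainds},\bm{\trdrinds})$ and $\bm{R}(\bm{\trdrinds},\bm{\trtainds})\bm{Q}=\bm{R}(\bm{\trdrinds},\bm{\trdrinds})$ so that each summand becomes a sampled Kronecker product handled by one GVT call, with the constant number of summands giving the claimed complexity. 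No gaps; this is essentially the paper's argument.
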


\begin{proof}
We first show that the kernel matrices over the whole domain of $\mathcal{D}$ and $\mathcal{T}$ can be compactly expressed with the operator notation and show the indexed case afterwards.


\begin{equation*}
\begin{split}
\bm{K}^{\textnormal{Kronecker}}_{(\drug,\targ),(\overline{\drug},\overline{\targ})}
&=\dkf(\drug,\overline{\drug})\tkf(\targ,\overline{\targ})\\
&= \bm{D}_{\drug,\overline{\drug}}\bm{T}_{\targ,\overline{\targ}}\\
&= {(\bm{D}\otimes \bm{T})}_{(\drug,\targ),(\overline{\drug},\overline{\targ})} \\
\end{split}
\end{equation*}

\begin{equation*}
\begin{split}
\bm{K}_{(\drug,\targ),(\overline{\drug},\overline{\targ})}^{\textnormal{Linear}}
    &= \dkf(\drug,\overline{\drug}) + \tkf(\targ,\overline{\targ})\\
    &= {(\bm{D}\otimes \bm{1} + \bm{1} \otimes \bm{T})}_{(\drug,\targ),(\overline{\drug},\overline{\targ})} \\
\end{split}
\end{equation*}

\begin{equation*}
\begin{split}
  \bm{K}^{\textnormal{Poly2D}}_{(\drug,\targ),(\overline{\drug},\overline{\targ})}
  &= {(\dkf(\drug,\overline{\drug}) + \tkf(\targ,\overline{\targ}))}^2\\
  &= \dkf(\drug,\overline{\drug})\dkf(\drug,\overline{\drug}) + 2\dkf(\drug,\overline{\drug})\tkf(\targ,\overline{\targ})+\tkf(\targ,\overline{\targ})\tkf(\targ,\overline{\targ})\\
  &= (\bm{Q}(\bm{D}\otimes \bm{D})\bm{Q}^T + 2 \bm{D} \otimes \bm{T} + \bm{P}\bm{Q}(\bm{T} \otimes \bm{T})\bm{Q}^T\bm{P})_{(\drug,\targ),(\overline{\drug},\overline{\targ})}
\end{split}
\end{equation*}

\begin{equation*}
\begin{split}
  \bm{K}^{\textnormal{Cartesian}}_{(\drug,\targ),(\overline{\drug},\overline{\targ})}
  &= \dkf(\drug,\overline{\drug})\delta(\targ,\overline{\targ}) + \delta(\drug,\overline{\drug})\tkf(\targ,\overline{\targ})\\
  &= {(\bm{D}\otimes \bm{I} + \bm{I} \otimes \bm{T})}_{(\drug,\targ),(\overline{\drug},\overline{\targ})}
\end{split}
\end{equation*}



\begin{equation*}
\begin{split}
\bm{K}^{\textnormal{Symmetric}}_{(\drug,\sdrug),(\overline{\drug},\overline{\sdrug})}
  &= \dkf(\drug,\overline{\drug})\dkf(\sdrug,\overline{\sdrug}) + \dkf(\sdrug, \overline{\drug})\dkf(\drug,\overline{\sdrug})\\
&={(\bm{D}\otimes \bm{D})}_{(\drug,\sdrug),(\overline{\drug},\overline{\sdrug})}+{(\bm{D}\otimes \bm{D})}_{(\sdrug,\drug),(\overline{\drug},\overline{\sdrug})}\\
&= \left((\bm{P}+\bm{I})(\bm{D}\otimes \bm{D})\right)_{(\drug,\sdrug),(\overline{\drug},\overline{\sdrug})}
\end{split}
\end{equation*}

\begin{equation*}
\begin{split}
  \bm{K}_{(\drug,\sdrug),(\overline{\drug},\overline{\sdrug})}^{\textnormal{Anti-Symmetric}} 
  &= \dkf(\drug,\overline{\drug})\dkf(\sdrug,\overline{\sdrug}) - \dkf(\sdrug, \overline{\drug})\dkf(\drug,\overline{\sdrug})\\
&= \left((\bm{P}-\bm{I})(\bm{D}\otimes \bm{D})\right)_{(\drug,\sdrug),(\overline{\drug},\overline{\sdrug})}
\end{split}
\end{equation*}

\begin{equation*}
\begin{split}
  \bm{K}_{(\drug,\sdrug),(\overline{\drug},\overline{\sdrug})}^{\textnormal{Ranking}} 
  &= \dkf(\drug, \overline{\drug}) - \dkf(\sdrug,\overline{\drug}) - \dkf(\drug, \overline{\sdrug}) + \dkf(\sdrug, \overline{\sdrug})\\
  &= [(\bm{I}-\bm{P})(\bm{D} \otimes \bm{1})(\bm{I}-\bm{P})]_{(\drug,\sdrug),(\overline{\drug},\overline{\sdrug})}\\
\end{split}
\end{equation*}

\begin{equation*}
\begin{split}
  \bm{K}_{(\drug,\sdrug),(\overline{\drug},\overline{\sdrug})}^{\textnormal{MLPK}}
  =& \left(\dkf(\drug, \overline{\drug}) - \dkf(\sdrug,\overline{\drug}) - \dkf(\drug, \overline{\sdrug}) + \dkf(\sdrug, \overline{\sdrug})\right)^2\\
  =& \dkf(\drug, \overline{\drug})^2 + \dkf(\sdrug,\overline{\drug})^2 + \dkf(\drug, \overline{\sdrug})^2 + \dkf(\sdrug, \overline{\sdrug})^2 \\
  &+ 2 \dkf(\drug, \overline{\drug})\dkf(\sdrug, \overline{\sdrug}) + 2 \dkf(\sdrug,\overline{\drug}) \dkf(\drug, \overline{\sdrug}) \\
  &- 2 \dkf(\drug, \overline{\drug})\dkf(\sdrug,\overline{\drug}) - 2\dkf(\drug, \overline{\drug})\dkf(\drug, \overline{\sdrug}) \\
  &- 2\dkf(\sdrug,\overline{\drug}) \dkf(\sdrug, \overline{\sdrug}) - 2 \dkf(\drug, \overline{\sdrug})\dkf(\sdrug, \overline{\sdrug})\\
=& (\bm{Q}(\bm{D}\otimes \bm{D})\bm{Q}^T + \bm{P}\bm{Q}(\bm{D}\otimes \bm{D})\bm{Q}^{T}\\
& + \bm{Q}(\bm{D}\otimes \bm{D})\bm{Q}\bm{P} + \bm{P}\bm{Q}(\bm{D}\otimes \bm{D})\bm{Q}^T\bm{P} + 2(\bm{D} \otimes \bm{D}) \\
&+ 2\bm{P}(\bm{D} \otimes \bm{D}) - 2\bm{Q}(\bm{D} \otimes \bm{D}) - 2\bm{P}\bm{Q}(\bm{D} \otimes \bm{D})-2(\bm{D} \otimes \bm{D})\bm{Q}^T \\ 
&- 2(\bm{D} \otimes \bm{D})\bm{Q}^T \bm{P})_{(\drug,\sdrug),(\overline{\drug},\overline{\sdrug})} \\
  =& [(\bm{I}+\bm{P})(\bm{I}-\bm{Q})(\bm{D}\otimes \bm{D})(\bm{I}-\bm{Q})(\bm{I}+\bm{P})]_{(\drug,\sdrug),(\overline{\drug},\overline{\sdrug})} \\
\end{split}
\end{equation*}


Now, recall that if we have two samples of data, say $X=(\bm{\trdrinds},\bm{\trtainds})$ and $\overline{X}=(\overline{\bm{\trdrinds}},\overline{\bm{\trtainds}})$, and we intend to calculate all kernel evaluations between data in the first sample with the second sample,
the matrix consisting of these kernel evaluations is defined as follows:
\[ \bm{K}= \bm{R}(\overline{\bm{\trdrinds}},\overline{\bm{\trtainds}})\bm{K}^{\textnormal{kernel}}(\bm{D},\bm{T})\bm{R}(\bm{\trdrinds},\bm{\trtainds})^T \]
By setting $\bm{\trdrinds}=\bm{\overline\trdrinds}$ and $\bm{\trtainds}=\bm{\overline\trtainds}$ we may as a special case define the kernel matrix for the training data.


We also have the following rules on permuting either of the indexing matrices with the commutation or the unification operator:
\[  \bm{R}(\bm{\trdrinds},\bm{\trtainds})\bm{P} =  \bm{R}(\bm{\trtainds}, \bm{\trdrinds}) \]
\[  \bm{R}(\bm{\trdrinds},\bm{\trtainds})\bm{Q} =  \bm{R}(\bm{\trdrinds}, \bm{\trdrinds}) \]
\[  \bm{P}^T \bm{R}(\bm{\trdrinds},\bm{\trtainds})^T =  \bm{R}(\bm{\trtainds}, \bm{\trdrinds})^T \]
\[  \bm{Q}^T \bm{R}(\bm{\trdrinds},\bm{\trtainds})^T =  \bm{R}(\bm{\trdrinds}, \bm{\trdrinds})^T \]
To obtain the incomplete data pairwise kernel matrix, we multiply the complete data pairwise kernel matrix $\bm{K}^{\textnormal{kernel}}(\bm{D},\bm{T})$ with the indexing matrix $\bm{R}(\overline{\bm{\trdrinds}},\overline{\bm{\trtainds}})$ and $\bm{R}(\bm{\trdrinds},\bm{\trtainds})^T$ from left and right sides, respectively. The complete data pairwise kernel matrix is a sum of permuted Kronecker product matrices, so these results imply different indexing matrices for each term in the sum. We can then apply GVT to each term separately.
\end{proof}

We have two ways of calculating an identical matrix-vector product given kernel matrices and sample indices $\overline{\bm{\trdrinds}},\overline{\bm{\trtainds}},\bm{\trdrinds},\bm{\trtainds}$, with vectors $\bm{a}\in\mathbb{R}^\trsize,\bm{u}\in\mathbb{R}^t$ :
\begin{enumerate}
    \item Use the standard matrix vector product with the kernel matrix: $\bm{u}\leftarrow \bm{K} \bm{a}$,
    \item Use GVT in Theorem (\ref{mainproposition}):  $\bm{u}\leftarrow\textnormal{vectrick}(\bm{D},\bm{T},\overline{\bm{\trdrinds}},\overline{\bm{\trtainds}},\bm{\trdrinds},\bm{\trtainds},\bm{a})$.
\end{enumerate}
In computing the pairwise kernel matrix $ \bm{K}= \bm{R}(\overline{\bm{\trdrinds}},\overline{\bm{\trtainds}})\bm{K}^{\textnormal{kernel}}(\bm{D},\bm{T})\bm{R}(\bm{\trdrinds},\bm{\trtainds})^T $, only elements in the indexing matrices need to be computed. The computational complexity of implementing approach $1$ directly is $O(\trsize\testsetsize)$. Based on Theorem~\ref{mainproposition} and Corollary~\ref{pairwisetheorem}, the complexity of approach 2 for any of the kernels listed in Table~\ref{table:4a} is  $O(\textnormal{min}(\overline{\tasize}\trsize+\drsize\overline{\trsize},\overline{\drsize}\trsize+\tasize\overline{\trsize}))$. For the training kernel matrix, these complexities can be simplified as $O(\trsize^2$) and $O(\tasize\trsize+\drsize\trsize)$.

\section{Data sets}

\begin{table}[h!]
\centering
\begin{tabular}{ |l|l|l|l|l|l|l|l|l| } 
 \hline
  Data set & Pairs & Drugs & Targ. & Hom. & Dens. & $|\dkm|$ & $|\tkm|$ & $|\bm{K}|$ \\ 
  \hline
  Heterodimer & 5497 & 1526 & 1526 & X & 0.2\% & 3 & & 6\\ 
  Metz & 93 356 & 156 & 1421 & & 42\% & 2 & 2 & 4\\ 
  Merget & 167 995 & 2967 & 226 & & 25\% & 10 & 9 & 4\\ 
  Kernel filling & 8 803 089 & 2967 & 2967 & X & 100\% & 10 & & 6\\ 
 \hline
\end{tabular}
\caption{Data sets used in the experiments.
We report for each data set the number of pairs and unique drugs and targets, and whether the data is homogenous. Density is the fraction of drug target pairs that have known labels.
We denote the number of drug kernels $|\dkm|$, target kernels $|\tkm|$ and pairwise kernels $|\bm{K}|$.}
\label{table:5}
\end{table}

We apply the pairwise kernel learning framework to four biological data sets. As shown in Table \ref{table:5}, the data sets have quite different characteristics. They vary in the number of samples, ratio of drugs to targets, homogeneity, density, and features. While our data sets belong to the same domain, the different prediction tasks provide an useful benchmark on how pairwise kernels perform over different applications.


\subsection{Heterodimer}

Many proteins bind together and form multiprotein structures called protein complexes, which have essential roles in a variety of biological functions. To understand how proteins function, one needs to identify those sets of proteins that form complexes. A significant fraction of known protein complexes are heterodimers, that is, formed by the assembly of only two proteins.  Recent research has taken into account information from measured protein-protein interactions and other possible protein information sources in order to develop new methods for predicting complexes, especially for smaller sizes \citep{ruan2018improving, maruyama2011heterodimeric, ruan2013prediction}.

Labels for a heterodimer data set can be generated from databases of curated protein complexes. We created positive and negative examples following a paper which applied Naive Bayes for supervised learning of heterodimers \citep{maruyama2011heterodimeric}. The labels are based on CYC2008 \citep{pu2008up}, a comprehensive catalogue of 408 manually curated yeast protein complexes, and WI-PHI \citep{kiemer2007wi}, a dataset 49607 (protein,protein)-interactions. A positive (negative) example of a heterodimer is a pair of proteins satisfying the following conditions:
\begin{enumerate}
    \item is (is not) a heterodimeric protein complex in CYC2008
    \item is not (is) a proper subset of any other complex in CYC2008
    \item WI-PHI includes the PPI corresponding to it
\end{enumerate}
This results in a total of 152 positive examples and 5345 negative examples.

Following research that sought to improve heterodimer predictions \citep{ruan2018improving}, we added protein features by considering domain, phylogenetic profile and subcellular localization properties. The idea is that proteins having a similar specification are more likely to form a complex because they are functionally linked. We obtained the domain and subcellular location information from UniProtKB and the phylogenetic profile from KEGG OC \citep{nakaya2012kegg}. The feature map $\phi$ for each of the 1526 proteins is one of three binary vectors (length in parenthesis): 

\begin{enumerate}
    \item $\phi_{\textnormal{dom}}(P_i)_j$: the $j$-th domain occurs in the protein $P_i$ (2554),
    \item $\phi_{\textnormal{phylo}}(P_i)_j$: the $j$-th genome contains the homolog $P_i$ (768),
    \item $\phi_{\textnormal{local}}(P_i)_j$: the $j$-th subcellular localization contains the protein $P_i$ (83).
\end{enumerate}
We computed the protein kernels $\dkm$ using the Tanimoto kernel on these binary feature vectors. Given binary vectors $\bm{v}$ and $\overline{\bm{v}}$  of length $l$, it is defined as the ratio of bits set to 1 in both vs. bits set to 1 in either: $k_{d}(\bm{v},\overline{\bm{v}})=\sum_{i=1}^l\textnormal{min}(v_i,\overline{v}_i)/\sum_{i=1}^l\textnormal{max}(v_i,\overline{v}_i)$.

\subsection{Metz}

Understanding interactions beween chemical compounds and cellular targets is an important research topic in biology.  For example, protein kineases control many aspects of the cell life cycle, and drugs that inhibit specific kineases have been developed to treat several diseases. Large-scale bioactivity assays enable the prediction of interactions across wide panels kinease inhibitors and their potential cellular targets.  In particular, supervised machine learning is a promising approach of predicting interactions since it can use structural similarities among the drug compounds and genomic similarities among target proteins. 

Labels for an interaction data set were based on biochemical selectivity assays for clinically relevant kinease inhibitors by \citet{Metz2011kinome}. The interaction affinity between a ligand molecule (e.g. a drug compound) and a target molecule (e.g. a protein kinease) reflects how tightly the ligand binds to a particular target, quantified using the inhibition constant $K_i$. The smaller the $K_i$ bioactivity, the higher the interaction affinity between the chemical compond and the protein kinase. We binarized the real valued interactions using a relatively stringent threshold of $K_i<28.18\textnormal{nm}$ into 2798 interacting and 90 558 non-interacting pairs.

Following a study that investigated how well machine learning based methods work in different prediction tasks \citep{pahikkala2015dti}, we extracted features for both drugs and targets. Drug features were based on chemical properties, where structural fingerprint similarity was computed as the two dimensional (2D) Tanimoto coefficient based on the structure clustering server at PubChem. Target features were based on genomic data, where sequence similarities were computed using a normalized version of the Smith-Waterman (SW) score. In total, we have 156 drugs and 1421 targets, with a symmetric 156 x 156 (drug, drug)-similarity matrix $X_\drug$ and a symmetric 1421 x 1421 (target,target)-similarity matrix $X_\targ$.  Following the previous study, we used the drug and target similarity matrix rows as feature vectors, computing either a linear kernel $\kf_{\textnormal{Linear}}(\bm{x}_i,\bm{x}_j)=\langle\bm{x}_i,\bm{x}_j\rangle$ or a Gaussian kernel $\kf_{\textnormal{
Gaussian}}(\bm{x}_i,\bm{x}_j)=e^{-\gamma{\|\bm{x}_i - \bm{x}_j\|}^2}$ with bandwidth $\gamma=10^{-5}$ [4]. Assuming that target and drug kernels have the same specification, we then have either linear or Gaussian drug kernels $\dkm$ and target kernels $\tkm$. 

\subsection{Merget}

A study of similar drug bioactivity prediction appears in \citet{cichonska2018learning}, where the task was also to predict the interaction affinity between drug compounds and protein kinease targets. The authors evaluated the pairwise kronecker kernel resulting from 3210 different combinations of 10 drug and 320 target kernels. Many of the pairwise kernels were created by using varying choices of target kernel hyperparameters. This study is interesting for our purposes, because we can use these kernels to evaluate how different pairwise kernels compare on different features.

The labels were created by processing the drug-target interactions in Merget \citep{merget2016profiling} updated with ChEMBL bioactivities \citep{sorgenfrei2018kinome}. The authors used only drugs that had more than 1\% of bioactivities across kineases measured and only kineases with both domain and ATP binding pocket amino acid subsequences at PROSITE \citep{sigrist2012new}. This resulted in 2967 drugs and 226 protein kinases, with a total of 167 995 binding values.

The features were defined directly through multiple kernel functions for both drugs and targets. Drug kernels $\dkm$ were based on Tanimoto kernels using 10 different binary molecular fingerprints obtained with rcdk R package \citep{guha2007chemical}. Given a fixed choice of hyperparameters they had 9 different protein kernels $\tkm$: three Gaussian kernels based on gene ontology (GO) annotations, three kernels based Smith-Waterman (SW) sequence similarities, and three generic string (GS) kernels. Gaussian kernels were based on three GO profiles: molecular function, biological process and cellular components. The SW kernels and GS kernels are both based on three possible amino acid sequences: full kinase sequences, kinase domain subsequences and ATP binding pocket subsequences. These kernels used BLOSUM 50 as amino acid descriptors. These 9 protein kernels were originally expanded into 320 different kernels by varying the choice of hyperparameters.

\subsection{Kernel filling}

In the final experiment, we use the data set in \citet{cichonska2018learning} to define a  novel prediction task that has an even larger data set, in order to use it for scalability experiments. The authors calculated 10 different drug kernels $(\dkm^i)_{i=1...10}$, which can be used both as labels and as features in a kernel filling prediction task. Given $\trsize=2967$ drugs, each drug kernel is a $\dkm^i\in\mathbb{R}^{2967\times2967}$  matrix. If some of the $2967\times2967=8 803 089$ possible entries are missing, they can be predicted using another kernel that has these entries. For a choice of kernels $i\neq j$, denote $Y=\textnormal{vec}(\dkm^i)$ as the label vector and $\dkm^j$  as the drug kernel. The drug kernel is plugged into a pairwise kernel to predict the label vector. 

To create a smaller data set, we can sample an $\trsize\times \trsize$ submatrix from both kernel matrices, and split these entries into $\trsize_{\textnormal{train}}$ training samples and use remaining samples as setting 1 test samples. The entries outside the submatrix are test samples in settings 2, 3, and 4. The original data set is dense and real-valued; each (drug, drug)-pair has a latent feature vector encoded by the second kernel. Because we are predicting kernel encoded similarities of $\trsize$ drugs that belong to the same domain, the data set is also homogeneous.

\section{Experiments}

\begin{figure}[h]
    \centering
    \includegraphics[width=\textwidth]{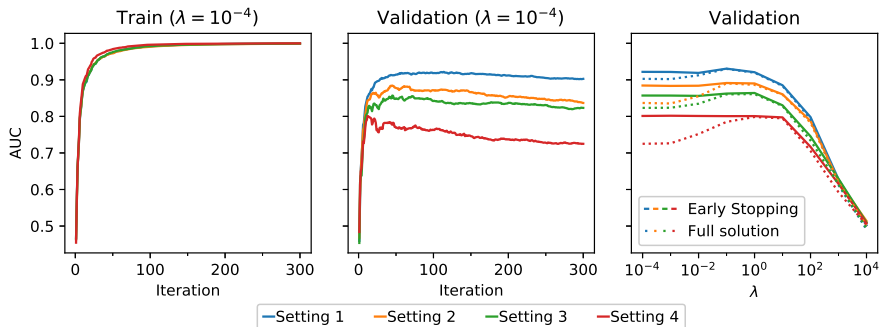}
    \caption{AUC per iteration and the effect of early stopping in the Ki data set.}
    \label{fig:1}
\end{figure}

We implemented ridge regression with the minimum residual optimization method, which is an iterative method for the numerical solution of a system of linear equations. The matrix vector products required within the minimum residual method were computed with either of two algorithms. Given a vector to be multiplied with a pairwise Kronecker kernel matrix, the baseline algorithm uses the explicit kernel matrix and the standard matrix vector product, whereas the fast method uses the GVT algorithm. We used the scipy.sparse.linalg.minres method in the SciPy library. The method CGKronRLS in the RLScore software package includes an user friendly implementation of GVT \citep{pahikkala2016rlscore}, for example. These two methods are identical except for the calculation of the matrix vector products.

Instead of solving the system completely, the minimum residual method can be run up to a given number of iterations. To speed up training, iterations may be stopped before the least squares solution is reached. In practice, a limited number of iterations is often sufficient to reach optimal model performance, where a separate validation set can be used to check whether model performance increases with more iterations. Limiting the number of iterations is also an effective regularization method, known as early-stopping in the literature. A method that includes early stopping therefore has the number of iterations $k$ as a hyperparameter. Regularization can then be performed either by setting the Tikhonov regularization parameter $\lambda$ to a small constant and limiting the number of iterations $k$, or finding an optimal $\lambda$ and stopping iterations when the model has converged. Figure \ref{fig:1} illustrates the effect of early stopping in the Ki data set. The best validation set AUC was reached either by stopping the training early, or by finding the optimal regularization parameter and running the iterations until convergence.

We implemented early stopping ridge regression as follows. The algorithm fits ridge regression $\textnormal{ridge}(Z_{\textnormal{obs}},\dkf,\tkf,\dtkf,\textnormal{setting})$ given a data set $Z_{\textnormal{obs}}$, drug kernel $\dkf$, target kernel $\tkf$, pairwise kernel $\dtkf$, and setting. We use 9-fold cross-validation, according to the setting (see Table~\ref{table:3}), to split the data set into $Z_{\textnormal{train}}$ and $Z_{\textnormal{test}}$ pair during each round. On each round of cross-validation, the training set $Z_{\textnormal{train}}$ is further split into an inner training set $Z_{\textnormal{inner}}$  and a validation set $Z_{\textnormal{validation}}$  according to the setting. The optimal hyperparameter for the number of iterations $k$ is then found by running the minimum residual algorithm on $Z_{\textnormal{inner}}$  until the AUC stops increasing in $Z_{\textnormal{validation}}$ for a given number of iterations.
The number of iterations required and the observed AUC on the validation set is stored. Finally, the model is fit to the full training set $Z_{\textnormal{train}}$ using this many iterations. The resulting model is used to make predictions for the test set $Z_{\textnormal{test}}$, for which the AUC is measured.

\subsection{Heterodimers}

\begin{figure}[h]
    \centering
    \includegraphics[width=1\textwidth]{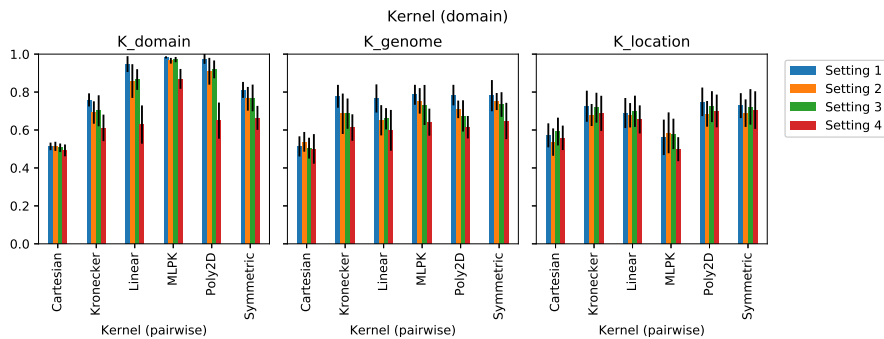}
    \caption{Heterodimers data set: mean and standard deviation of AUCs in test folds for different kernels and settings.}
    \label{fig:heterodimer}
\end{figure}

We tested different pairwise kernels, features and settings in the heterodimers data set.  The experiment included every combination of following choices:
\begin{enumerate}
    \item Drug kernel $\dkf\in\{\dkf^{\textnormal{Domain}},\dkf^{\textnormal{Genome}},\dkf^{\textnormal{Location}}\}$
    \item Pairwise kernel $\dtkf\in\{\dtkf^{\textnormal{Linear}},\dtkf^{\textnormal{Poly2D}},\dtkf^{\textnormal{Kron.}},\dtkf^{\textnormal{Cartesian}},\dtkf^{\textnormal{Symm.}},\dtkf^{\textnormal{MLPK}}\}$
    \item $\textnormal{Setting}\in\{1,2,3,4\}$  splits $Z_{\textnormal{train}}$ into 75\% $Z_{\textnormal{inner}}$ and 25\% $Z_{\textnormal{validation}}$. We fit ridge regression in $Z_{\textnormal{inner}}$ while the AUC in $Z_{\textnormal{validation}}$ is improving.
    \item We then train a $\textnormal{model}\leftarrow\textnormal{ridge}(Z_{\textnormal{train}},\dkf,\tkf,\dtkf,\textnormal{setting})$ with the optimal number of iterations and calculate the AUC corresponding to $\textnormal{Setting}\in\{1,2,3,4\}$ in $Z_{\textnormal{test}}$.

\end{enumerate}

The results in Figure \ref{fig:heterodimer} show that the best pairwise kernel strongly depends on features. For domain features, the MLPK is by far the best pairwise kernel with almost perfect predictions. However, for genome and location features the best kernels are the second degree polynomial and symmetric Kronecker kernels by a notable margin. While the best pairwise kernel depends on the underlying features, using different drug kernel (Min/MinMax/Norm) for the binary feature vectors did not have significant effects, so we report only the Tanimoto, or MinMax, kernel.  The best kernel does not seem to vary by the setting, but the later settings are slightly more challenging. The linear kernel that excludes pairwise interactions, and simply models some proteins having more interactions than others, offers suprisingly good results. However, it seems that in this data set there are also significant pairwise interactions that the other kernels are able to capture.

\subsection{Metz}

\begin{figure}[h]
    \centering
    \includegraphics[width=0.99\textwidth]{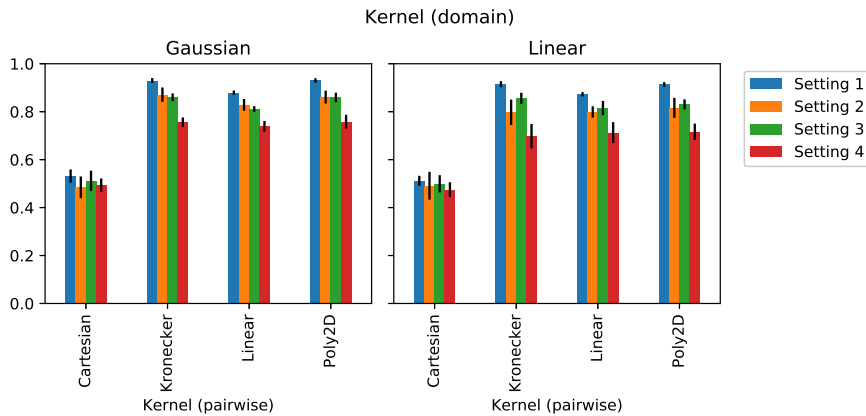}
    \caption{Metz data set: mean and standard deviation of AUCs in test folds for different kernels and settings.}
    \label{fig:metz}
\end{figure}

We tested different pairwise kernels, features and settings in the Metz data set. The experiment included every combination of following choices:
\begin{enumerate}
    \item The drug and target kernels \\$(\dkf,\tkf)\in\{(\dkf^{\textnormal{Linear}},\tkf^{\textnormal{Linear}}),(\dkf^{\textnormal{Gaussian}},\tkf^{\textnormal{Gaussian}})\}$
    \item The pairwise kernel $\dtkf\in\{\dtkf^{\textnormal{Linear}},\dtkf^{\textnormal{Poly2D}},\dtkf^{\textnormal{Kronecker}},\dtkf^{\textnormal{Cartesian}}\}$
    \item
    $\textnormal{Setting}\in\{1,2,3,4\}$  splits $Z_{\textnormal{train}}$ into 75\% $Z_{\textnormal{inner}}$ and 25\% $Z_{\textnormal{validation}}$. We fit ridge regression in $Z_{\textnormal{inner}}$ while the AUC in $Z_{\textnormal{validation}}$ is improving.
    \item We then train a $\textnormal{model}\leftarrow\textnormal{ridge}(Z_{\textnormal{train}},\dkf,\tkf,\dtkf,\textnormal{setting})$ with the optimal number of iterations and calculate the AUC corresponding to $\textnormal{Setting}\in\{1,2,3,4\}$ in $Z_{\textnormal{test}}$.
\end{enumerate}

The results in Figure \ref{fig:metz} show that for both Linear and Gaussian drug kernels, the second degree polynomial and Kronecker pairwise kernels have the best and comparable performance, because they also include pairwise interactions. The linear kernel offers suprisingly good results, not very far from optimal, but there are also some pairwise interactions that contribute to the prediction task. The cartesian kernel is not much better than random on the task. There seem to be some benefits from using the Gaussian instead of the linear drug kernel, which are comparable in magnitude to the benefits from modeling pairwise interactions. Regardless of the drug kernels used as features, over different experiments the relative pairwise kernel performance is the same.

\subsection{Merget}

\begin{figure}[h]
    \centering
    \includegraphics[width=0.99\textwidth]{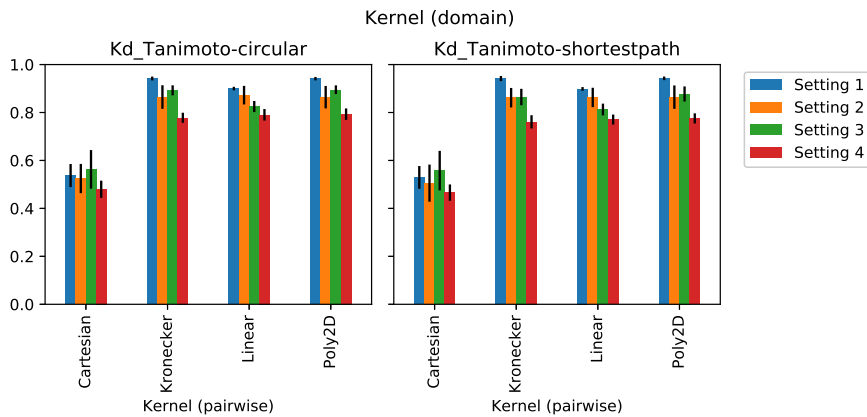}
    \caption{Merget data set: mean and standard deviation of AUCs in test folds for different kernels and settings.}
    \label{fig:3}
\end{figure}

We tested different pairwise kernels, features and settings in the Merget data set. The experiment included every combination of following choices:

\begin{enumerate}
    \item The drug and target kernels 
\begin{equation*}
\begin{split}
    (\dkf,\tkf)\in\{&(\dkf^{\textnormal{sp}},\tkf^{\textnormal{GS−atp−5.4.4}}), (\dkf^{\textnormal{circular}},\tkf^{\textnormal{GS−atp−5.4.4}}),\\
&(\dkf^{\textnormal{kr}},\tkf^{\textnormal{GS−atp−5.4.4}}),
(\dkf^{\textnormal{circular}},\tkf^{\textnormal{GS−kindom−5.4.4}}),\\
&(\dkf^{\textnormal{circular}},\tkf^{\textnormal{GO−bp−71}}),
(\dkf^{\textnormal{circular}},\tkf^{\textnormal{GO−cc−19}}),\\
&(\dkf^{\textnormal{circular}},\tkf^{\textnormal{SW−kindom}}),
(\dkf^{\textnormal{circular}},\tkf^{\textnormal{GS−full−5.3.}})\}
\end{split}
\end{equation*}

    \item The pairwise kernel $\dtkf\in\{\dtkf^{\textnormal{Linear}},\dtkf^{\textnormal{Poly2D}},\dtkf^{\textnormal{Kronecker}},\dtkf^{\textnormal{Cartesian}}\}$
    \item $\textnormal{Setting}\in\{1,2,3,4\}$  splits $Z_{\textnormal{train}}$ into 75\% $Z_{\textnormal{inner}}$ and 25\% $Z_{\textnormal{validation}}$. We fit ridge regression in $Z_{\textnormal{inner}}$ while the AUC in $Z_{\textnormal{validation}}$ is improving.
    \item We then train a $\textnormal{model}\leftarrow\textnormal{ridge}(Z_{\textnormal{train}},\dkf,\tkf,\dtkf,\textnormal{setting})$ with the optimal number of iterations and calculate the AUC corresponding to $\textnormal{Setting}\in\{1,2,3,4\}$ in $Z_{\textnormal{test}}$.
\end{enumerate}

We obtain close to identical results for different (drug kernel, target kernel)-pairs, so we only present the first two pairs. The results in Figure \ref{fig:3} closely mirror the Metz data set. Polynomial and Kronecker kernel are the best with comparable performance in all pairs. Linear kernel has almost as good results, even though some pairwise interactions can be found between the drugs and the targets. Cartesian kernel is not much better than random, with an exception in setting 3. Over all possible drug and target kernel pairs, different features do not seem to have much of an effect on prediction performance or relative order of kernels. This is suprising given that the original study was motivated as a method that enables one to use a large mixture of different kernels to improve prediction performance.

\subsection{Kernel filling}

\begin{figure}[h]
    \centering
    \includegraphics[width=1\textwidth]{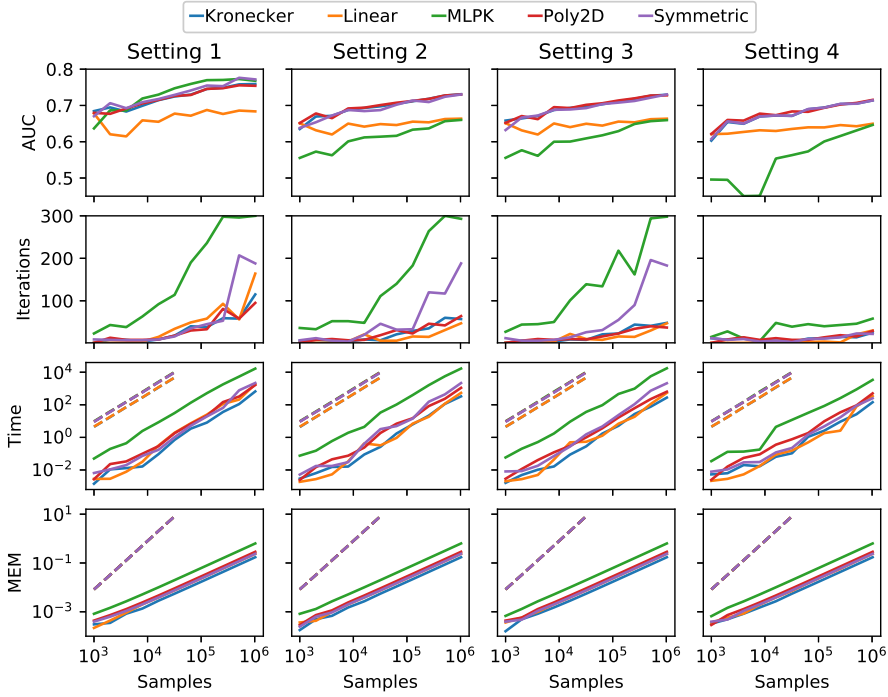}
    \caption{Kernel filling data set: GVT (solid) vs. Baseline (dashed). The AUCs of Kronecker, Poly2D and Symmetric kernels are almost identical and plotted on top of each other. Further, all the baselines have the same memory usage, and their time consumption is close to identical with each other.}
    \label{fig:4}
\end{figure}


We predict the missing labels in a drug kernel matrix $\bm{y}=\textnormal{vec}(\dkm^\textnormal{circular})$  using another drug kernel matrix $\dkm=\dkm^\textnormal{estate}$  as features. Different choices of drug kernels for labels and features result in a drastically different absolute prediction performance. However, not much difference is observed in the relative order of the pairwise kernels. For brevity, we therefore report the experiment on these two kernels, as they offered reasonable but not exceptionally high or low prediction performance.

Because there is so much data available in this task, we used separate test sets. For $N$ samples, the data set $Z_{\textnormal{obs}}$ is split into a  $(Z_{\textnormal{train}},Z_{\textnormal{test}}^{(1)},Z_{\textnormal{test}}^{(2)},Z_{\textnormal{test}}^{(3)},Z_{\textnormal{test}}^{(4)})$-partition by taking a subset of $k$ drugs such that approximately 50\% of the subset results in $Z_{\textnormal{train}}$ with $N$ samples and rest of the subset is $Z_{\textnormal{test}}^{(1)}$, with other drugs defining $Z_{\textnormal{test}}^{(2)},Z_{\textnormal{test}}^{(3)},Z_{\textnormal{test}}^{(4)}$. We then tested different drug and pairwise kernels to test how the number of iterations, CPU time, memory usage and the AUC on the test set is affected by the choice of the pairwise kernel. The experiment included every combination of following choices:
\begin{enumerate}
    \item The pairwise kernel \\ $\dtkf\in\{\dtkf^{\textnormal{Linear}},\dtkf^{\textnormal{Poly2D}},\dtkf^{\textnormal{Kronecker}},\dtkf^{\textnormal{Cartesian}},\dtkf^{\textnormal{Symmetric}},\dtkf^{\textnormal{MLPK}}\}$
    \item The $\textnormal{setting}\in\{1,2,3,4\}$  splits the data set $Z_{\textnormal{train}}$ into 75\% training set $Z_{\textnormal{inner}}$ and 25\% validation set $Z_{\textnormal{validation}}$.
\end{enumerate}
We iteratively fit early stopping ridge regression in $Z_{\textnormal{inner}}$ while the AUC in $Z_{\textnormal{validation}}$ is improving, and then save the optimal number of iterations. We then train the model on $Z_{\textnormal{train}}$ for that many iterations and evaluate the AUC on $Z_{\textnormal{test}}^{(\textnormal{setting})}$.

The number of iterations required to reach an optimal model is shown in Figure \ref{fig:4}. The number of iterations depends on the performance it is possible to achieve in a given setting. More iterations are needed to find a more elaborate model when it is possible to achieve a better prediction performance. Setting 1 requires most iterations, setting 2/3 somewhat less, and setting 4 fewest iterations to reach an optimal solution. Fitting the MLPK and symmetric Kronecker kernel seem to require significantly more iterations than other kernels. Note how the number of iterations is very modest, relative to the total number of samples that is theoretically needed to fully solve the linear system.

The CPU time in seconds and memory usage in GiB are shown in Figure \ref{fig:4}, respectively. The standard method requires significantly more time than the GVT method. At a time when the standard method ran out of memory, the training was taking over an hour whereas GVT completed in a second. The performance of GVT has a small constant term depending on how many summands of Kronecker kernels are required in the pairwise kernel expression. The Kronecker kernel is fastest of these because it has only one term and the MLPK slowest because it has 10 such terms. The naive method requires significantly more memory because it stores the full $O(\trsize^2)$ pairwise kernel matrix whereas GVT only stores the $O(\drsize^2)$ drug and $O(\tasize^2)$ kernel matrices. Here we have $\trsize\approx 0.5q^2$, which implies complexities $O_\textnormal{naive} (\tasize^4)$  vs. $O_\textnormal{GVT} (\tasize^2)$. The naive method experiments were stopped when $N$ required $>16\textnormal{GiB}$ memory, which did not become an issue with GVT for the size of this data set.

Prediction performance comparisons, quantified with the AUC in Figure \ref{fig:4}, are quite complicated because they depend on the setting and the size of the data set. We make the following observations in different settings:
\begin{enumerate}
    \item Setting 1: The MLPK kernel has slightly higher performance for larger data sets $N>10 000$. The Kronecker, second degree polynomial, and symmetric Kronecker kernels are comparable to each other and quite close to the MLPK.  For medium data sets $N\leq 10 000$, they perform better and incorporating prior knowledge via symmetrization may provide a small benefit. The linear kernel is significantly worse except for very small data sets $N\leq 1000$.
    \item Setting 2/3: The settings are equivalent because the domain is homogeneous. The MLPK kernel has worst performance for all data set sizes, and the linear kernel is significantly worse except for the smallest $N\leq 1000$ data sets. The Kronecker, second degree polynomial, and symmetric Kronecker kernels have best and almost identical performance. The general prediction accuracy is somewhat lower because the prediction task has become more difficult
    \item Setting 4: The results are similar to setting 2/3, but the overall prediction accuracy is even slightly lower.
\end{enumerate}

\subsection{Comparison to Nyström approximation with Falkon}
The method proposed in this article allows computing efficiently the exact solution to the regularized risk minimization problem for a family of commonly used pairwise kernels. In the following experiments, we compare the approach to a standard approximation method that speeds up training by using only a random subset of the training data to represent the learned function. Specifically, we compare the proposed method, implemented in RLScore package \citep{pahikkala2016rlscore}, to the Nyström-method based training algorithm implemented in the Falkon package \citep{rudi2007falkon,meanti2020kernel}. The Nyström approximation allows speeding up kernel methods on large data sets, though there is a trade-off in accuracy if the approximation is not sufficient. This introduces an additional hyperparameter: the number of basis vectors $N$ used as an approximation. The method then computes the kernel $\widetilde{\bm{K}}\in \mathbb{R}^{n \times N}$ between data points and basis vectors, and Falkon solves the resulting linear system using a preconditioned conjugate gradient optimizer. Our pairwise kernel implementation inherits the \textsf{falkon.kernels.kernel.Kernel} class from Falkon, and is implemented as c-language extension using Cython to guarantee efficiency. The comparison was implemented on the kernel filling task described in the previous section.

Our experiments in Figure~\ref{fig:falkon_hyperparameters} collaborate theoretic results, which state that increasing the number of basis vectors will result in a higher accuracy when properly regularized. The solution converges to the full solution as the number of basis vectors approaches the number of data points. We also saw that limiting the number of basis vectors effectively regularizes the problem, as the model converges quickly and early stopping results in an identical solution. However, a kernel matrix with 1 024 000 samples and 2048 basis vectors already consumes 16GiB memory so we use this as the best approximation. To align the results with RLScore, we use a regularization parameter $\lambda=1e-5$ and early stopping based on a validation set.

In Figure~\ref{fig:falkon_kronecker}, we perform experiments to compare the Falkon method with $N=32,128,512,2048$ basis vectors against RLScore that computes the full solution using GVT, both with the Kronecker product kernel. The experiments are otherwise identical to the previous experiment with the standard kernel method versus the GVT based method. We see that the quality of the approximation increases with the number of basis vectors, almost reaching the AUC of the RLScore implementation. RLScore has lower runtime and lower memory requirement. We conclude that both methods provide a drastically smaller runtime and memory use compared to the standard method, but are quite comparable to each other in computational requirements.
RLScore provides slightly better AUC with less computational resources, especially in Setting~1.

\begin{figure}[h]
    \centering
    \includegraphics[width=1\textwidth]{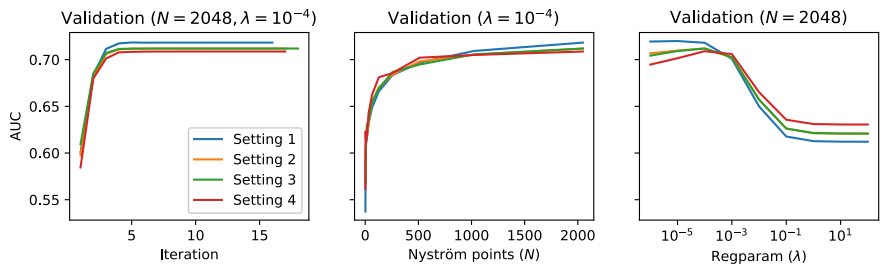}
    \caption{Tuning the hyperparameters of Falkon package with 64 000 data points: the number of basis vectors (middle) and regularization (right). Only a few iterations are required to reach optimal validation AUC (left).}
    \label{fig:falkon_hyperparameters}
\end{figure}

\begin{figure}[h]
    \centering
    \includegraphics[width=\textwidth]{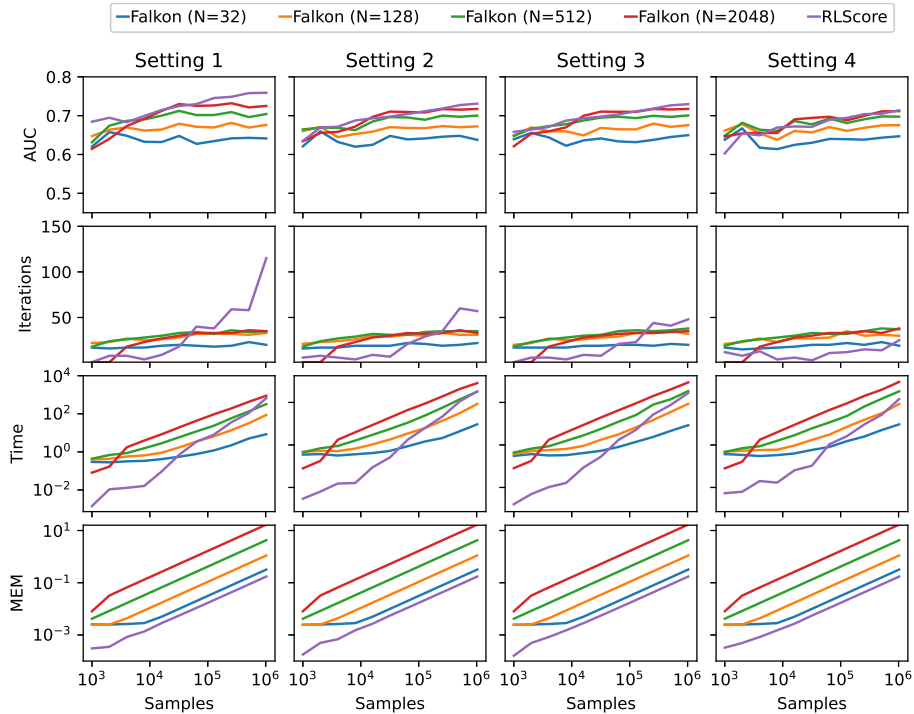}
    \caption{Kronecker kernel: Nyström approximation given a number of basis vectors implemented by the 'Falkon' package vs. full solution implemented with GVT by 'RLScore'.}
    \label{fig:falkon_kronecker}
\end{figure}

\section{Discussion and Conclusion}

In this work we reviewed the most commonly used pairwise kernels and introduced an operator based framework for analysing and implementing the kernels. The framework allows applying the generalized vec-trick algorithm \citep{Airola2017gvt} for speeding up matrix-vector products for the kernels, allowing much faster training and prediction than with explicit computation of the kernel matrix. As a specific use case we considered the ridge regression method, but the approach can be also used for speeding up the (sub)gradient computations for other types of regularized kernel methods, such as kernel logistic regression or support vector machines. Our experiments on drug-target data show that the approach allows scaling to much larger problem sizes than without the computational short cuts, and provides better predictive performance with the same computational resources than Falkon \citep{rudi2007falkon}, a the state-of-the art method for training large-scale kernel machines.
Further, the choice of optimal kernel is seen to be highly dependant on both the problem domain and the type of prediction task considered. 

An interesting observation that can be made from the experimental results is that in many cases the linear pairwise kernel produces results that are competitive with those obtained using the non-linear kernels. This is surprising in the sense that the kernel allows only expressing functions of the form  $f(d,t) =f_{d}(d) +f_{t}(t)$ that score the drugs and targets separately without truly modeling interactions between them. It seems implausible that the non-linearity assumption would not hold in the domain of drug-target interaction prediction, or other similar interaction prediction tasks, since this would imply the existence of "universal drug" that would be the optimal choice for all targets. 
We observed from our experimental results that, with larger sample sizes (see Figure~\ref{fig:4}), the nonlinear kernels were better able to capture the nonlinear components of the underlying signal, and the relative strength of the nonlinear part likely determines how large training sample is needed to capture it. An example showing the extreme cases containing either nonlinear or linear signal components only is given in Figure~\ref{chess}. The nonlinear component also may easily "drown" with high dimensional data, as the number of interaction terms increases quickly as its function.

We make the GVT code publicly available as part of the open source RLScore machine learning library \citep{pahikkala2016rlscore}, allowing other researchers and developers to make use of the described kernel matrix multiplication short cuts. Our work considers the specific case of pairwise data, an open question remains under what conditions similar efficient methods can be derived in general to $\trsize$th order tensorial data, which could be a Kronecker product of more than two kernel matrices. For example, the data may consist of triplets (drug, target, cell line) where each object in the triplet has its own kernel.
\begin{acknowledgements}
  The authors would like to thank Dr. Anna Cichonska for her generous help in providing us with the Merget and Kernel filling data sets.
\end{acknowledgements}

\section*{Declarations}

\noindent \textbf{Funding} This work was supported by Academy of Finland (Grants 311273, 313266, 340140, 340182).

\vspace{\baselineskip}

\noindent \textbf{Conflicts of interest} The authors declare that they have no conflicts of interest.

\vspace{\baselineskip}

\noindent \textbf{Ethics approval} Not Applicable

\vspace{\baselineskip}

\noindent \textbf{Consent to participate} Not Applicable

\vspace{\baselineskip}

\noindent \textbf{Consent for publication} Not Applicable

\vspace{\baselineskip}

\noindent \textbf{Availability of data and material} The data used in the Metz experiments can be downloaded from \url{http://staff.cs.utu.fi/~aatapa/data/DrugTarget/} and Merget and kernel filling data from \url{https://github.com/aalto-ics-kepaco/pairwiseMKL}).

\vspace{\baselineskip}

\noindent \textbf{Code availability} The GVT algorithm for fast training of pairwise kernel methods is made publicly available as part of RLScore package under open access license at \url{https://github.com/aatapa/RLScore}.

\vspace{\baselineskip}

\noindent \textbf{Author's contributions} MV wrote the initial draft of the manuscript, with contributions from AA and TP. All the authors edited together and approved the final version. MV was responsible for data processing and experiments, with experimental design done jointly by all the authors. All the authors contributed to writing code. AA and TP jointly planned the study deriving the joint framework for pairwise kernels, with contributions from MV.

\bibliographystyle{spbasic} 
\bibliography{myBibliography}

\end{document}